\newtheorem{theorem}{Theorem}[section]
\newtheorem{lemma}[theorem]{Lemma}
\newtheorem{definition}[theorem]{Definition}
\newtheorem{proposition}[theorem]{Proposition}
\newtheorem{remark}[theorem]{Remark}
\newtheorem{assumption}{Assumption}
\definecolor{shadecolor}{rgb}{0.92,0.92,0.92}
\title{Smart Sampling: Helping from Friendly Neighbors for Decentralized Federated Learning 
}
\author{
Lin Wang$^{1}$, Yang Chen$^{1}$, Yongxin Guo$^{1}$
Xiaoying Tang$^{1}$
 \\
$^1$School of Science and Engineering, The Chinese University of Hong Kong (Shenzhen). 
}
\begin{document}
\maketitle

\begin{abstract}
Federated Learning (FL) is gaining widespread interest for its ability to share knowledge while preserving privacy and reducing communication costs. Unlike Centralized FL, Decentralized FL (DFL) employs a network architecture that eliminates the need for a central server, allowing direct communication among clients and leading to significant communication resource savings. However, due to data heterogeneity, not all neighboring nodes contribute to enhancing the local client's model performance.
In this work, we introduce \textbf{\emph{AFIND+}}, a simple yet efficient algorithm for sampling and aggregating neighbors in DFL, with the aim of leveraging collaboration to improve clients' model performance.
AFIND+ identifies helpful neighbors, adaptively adjusts the number of selected neighbors, and strategically aggregates the sampled neighbors' models based on their contributions. Numerical results on real-world datasets with diverse data partitions demonstrate that AFIND+ outperforms other sampling algorithms in DFL and is compatible with most existing DFL optimization algorithms.
\end{abstract}


\section{Introduction}
Federated Learning (FL), a collaborative machine learning paradigm, has garnered attention for its capacity to train models on decentralized devices without sharing raw data~\citep{mcmahan2017communication}.  This method addresses privacy and communication concerns, proving valuable in fields like healthcare~\citep{kaissis2020secure}, energy~\citep{saputra2019energy}, and manufacturing~\citep{qu2020blockchained}.
In Centralized FL (CFL), a central server selects clients for training, where effective client sampling is key for accelerating convergence and reducing communication overhead~\citep{cho2020client,wang2022client,shen2022fast}.  However, due to data heterogeneity among clients, it's challenging to attain satisfactory performance for all using a single average central model~\citep{wang2021field,mendieta2022local}.

In contrast, decentralized federated learning (DFL), illustrated in Figure~\ref{CFL vs DFL}, utilizes a peer-to-peer structure where clients exchange model parameters directly, bypassing a central server~\citep{lalitha2019peer}. Each client develops a local, personalized model, addressing the model shift caused by data heterogeneity in Centralized FL (CFL)~\citep{sadiev2022decentralized,jeong2023personalized}. This shift to a decentralized architecture enhances privacy and reduces reliance on central infrastructure \citep{kairouz2019advances}. In particular, this decentralization does not need a global model explicitly or implicitly through the central server, thus significantly reducing the communication burden on the server side~\citep{li2021fedmask}.

Despite its advantages, DFL encounters challenges due to network heterogeneity, affecting collaborative efficiency among clients~\citep{roy2019braintorrent}. Similar to CFL, selecting neighboring nodes\footnote{The terms client, node, and neighbor are used interchangeably in this paper.} for collaboration is crucial in DFL as different collaborations resulting different model performance~\citep{beltran2023decentralized,qu2021decentralized}. 
We break down the neighbor node cooperation problem in DFL into three progressive challenges: identifying the right neighbors for collaboration, adaptively setting the number of participating clients, and valuing the importance of each selected client for aggregation.


\textbf{Challenge 1: Identify the right neighbors for collaboration.} Data heterogeneity across clients results in varied node distributions. Inappropriate neighbor selection can lead to performance degradation compared to solo training, while effective selection significantly boosts performance, as depicted in Fig~\ref{illustration of cooperation}. 
The underlying principle is the similarity in data distribution between clients, such as between Client 1 and Client 2, as opposed to the divergent distribution of Client 3. This highlights the importance of similarity in client sampling. To understand how similarity aids in finding suitable collaborators theoretically, we explore the concept of a coreset for DFL client sampling, discussed in more detail in Section~\ref{sec identify neighbors}.

\textbf{Challenge 2: Adaptive neighbor sampling.}
 In addition to data heterogeneity, the number of sampled clients is crucial. Fixed-number sampling methods may perform poorly on new tasks without fine-tuning, which is resource-intensive and often ineffective~\citep{sui2022find}. For example, in Figure~\ref{illustration of cooperation}, setting the neighbor sampling number to 2 (involving Client 1, Client 2, and Client 3) worsens performance compared to collaboration between Client 1 and Client 2 alone.
To address the limitations of fixed-number sampling, we propose a \emph{participation threshold} to evaluate the overall similarity between a client and its neighbors. This threshold adjusts based on neighbor distribution similarity. A higher threshold is set when neighbors have diverse distributions, reducing the number of sampled clients to avoid those who negatively contribute. Conversely, a lower threshold is applied when most neighbors have similar distributions to the client.

\textbf{Challenge 3: Aggregate the sampled clients.}
Current DFL client sampling algorithms, after selecting a subset of neighbors for collaboration, often overlook the aggregation step, typically averaging the models received from neighbors~\cite{onoszko2021decentralized}. However, effective aggregation is crucial in FL to maximize the efficiency of sampling~\cite{fraboni2021impact,wang2022client}.
The varying data distributions among sampled clients lead to differing contributions to collaboration. As shown in the experimental results of Figure~\ref{Fig of convergence performance}, contribution-aware aggregation, which considers the individual impact of each client, is more effective than simple averaging, leading to faster convergence.

\begin{figure}
\centering
    \includegraphics[width=0.7\textwidth]{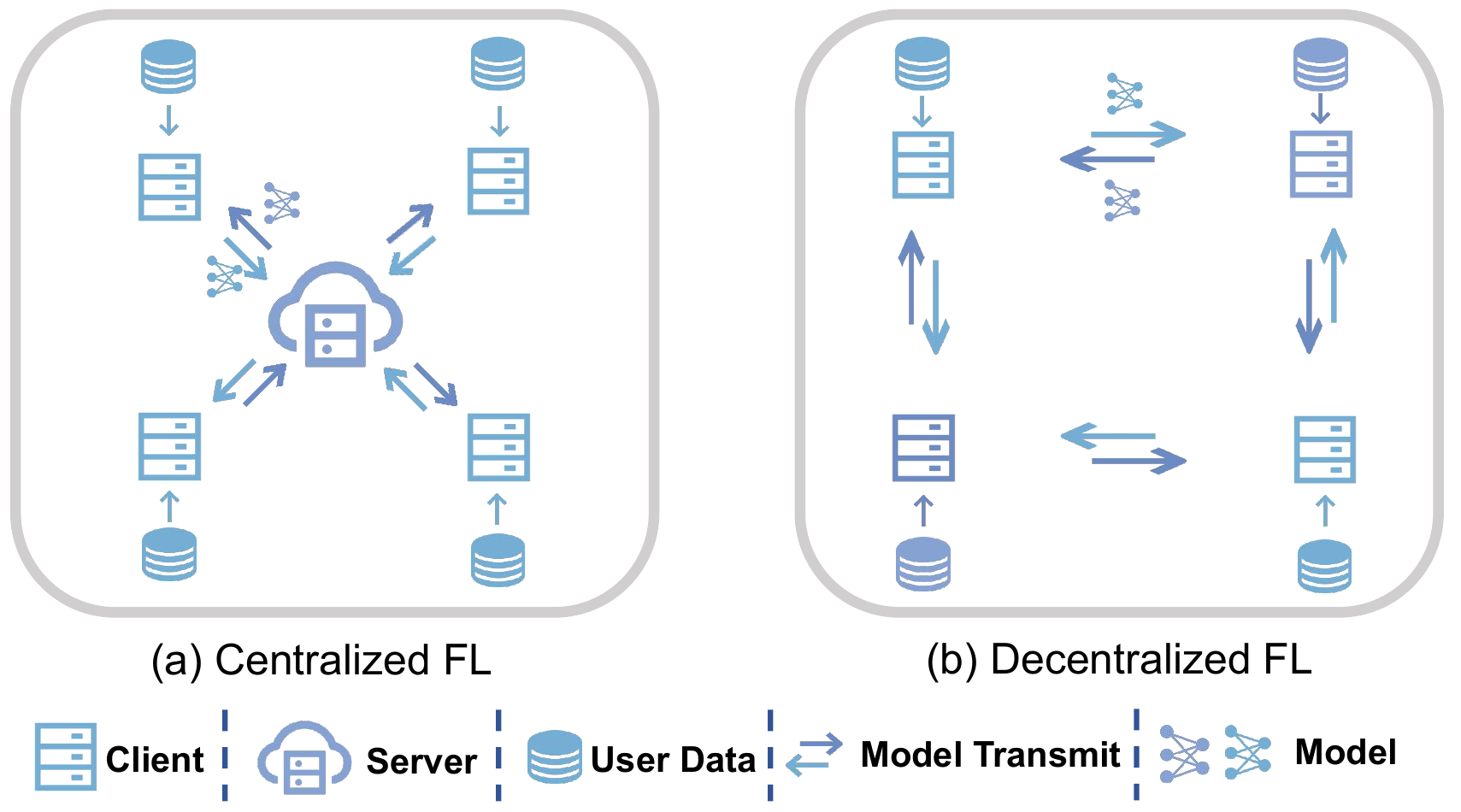}
    \caption{\textbf{Illustration of Centralized FL and Decentralized FL.}
    In centralized FL, communication takes place between the server and the clients, whereas in decentralized FL, it occurs directly between clients without the need for a central server.
    }
    \label{CFL vs DFL}
\end{figure}

To this end, we propose \textbf{AFIND+}, an \textbf{A}daptive \textbf{F}r\textbf{I}endly \textbf{N}eighbor \textbf{D}iscory algorithm,  designed to enhance collaboration and performance of DFL.
Specifically, based on theoretical and empirical observations, AFIND+ uses the output of a featurizer as a proxy for clients' model similarity to identify neighbors with similar data distributions. To adaptively adjust the number of collaborators, AFIND+ sets a threshold to terminate the greedy selection process using confidence levels in client and neighbor similarity. Among the selected neighbors, AFIND+ assesses their importance to the client and reweights them for aggregation based on their contributions.


\textbf{The contribution and novelty of AFIND+ can be summarized as follows:}
\begin{itemize}[leftmargin=12pt,nosep]
\setlength{\itemsep}{3.pt}
\item 
To the best of our knowledge, this work is the first to provide both theoretical insights and empirical evidence that, in DFL, helpful collaborators are those possessing similar data distributions. Based on this insight, we propose a greedy selection strategy to identify such collaborators.
    \item By proposing the use of the confidence level between clients and neighbors, it is the first to enable a dynamic number of collaboration neighbors, which helps set a termination for the greedy selection.
    \item It strategically aggregates the sampled neighbors based on their contributions, instead of merely sampling and leaving them alone, differing from existing methods.
    \item
Theoretically, we provide convergence guarantees for our algorithm in a general nonconvex setting, achieving a convergence rate of $\mathcal{O}\left(\frac{1}{\sqrt{T}} + \frac{1}{T}\right)$. Empirically, we conduct extensive experiments on realistic data tasks using diverse data partition methods, evaluating the efficacy of our algorithm compared with the SOTA sampling baselines in DFL, achieving a maximum improvement of 5\% on CIFAR-10 and 4\% on CIFAR-100.
\end{itemize}

\begin{figure}
\centering
    \includegraphics[width=0.7\textwidth]{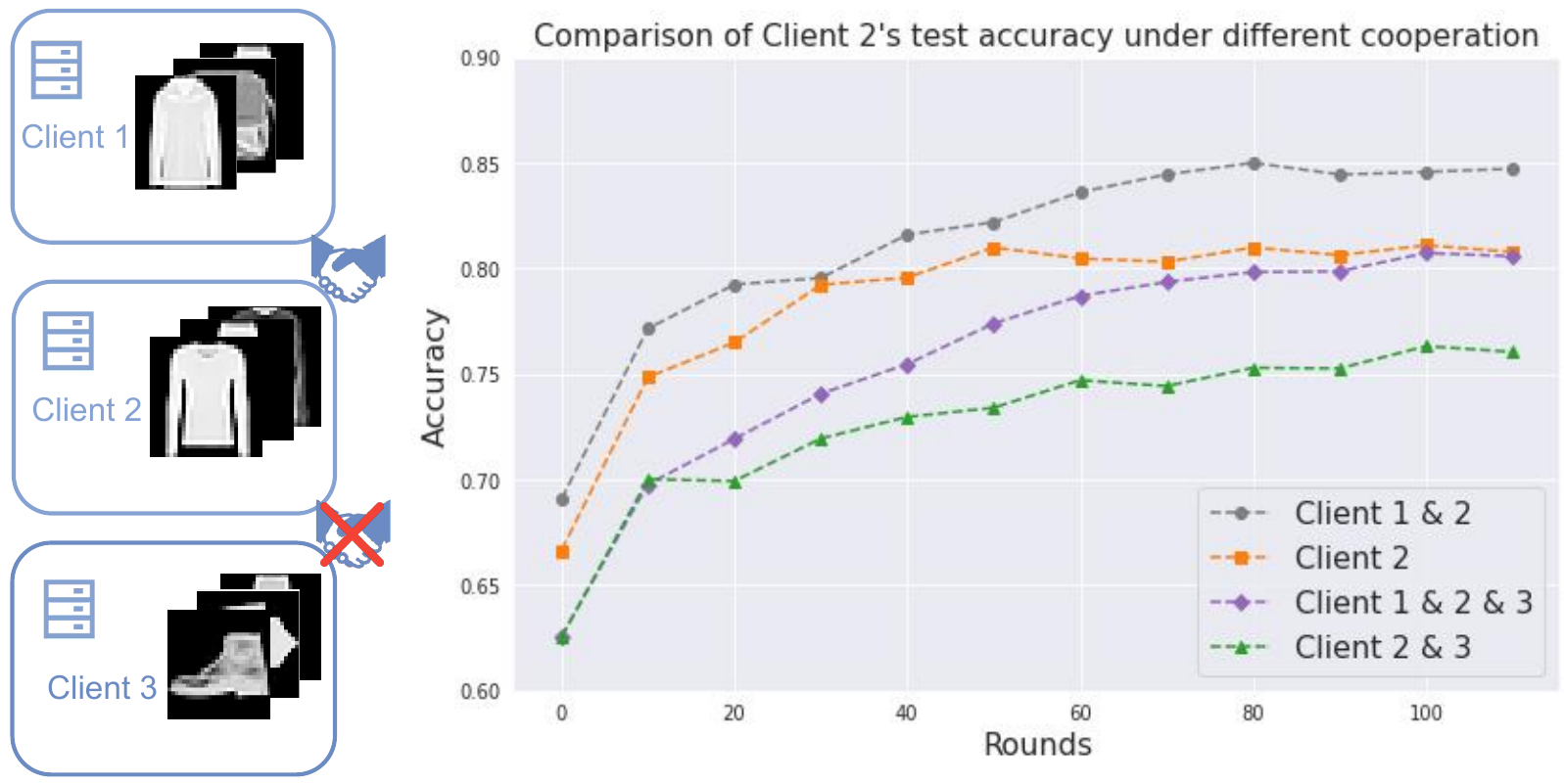}
    \caption{\textbf{Toy examples show the importance of appropriate cooperation.} 
Clients 1 and 2 have FashionMNIST datasets with labels \{4, 5, 6, 7\}, while Client 3's dataset has labels \{0, 1, 2, 3\}. Using Client 2 as a baseline, accuracy comparisons indicate that collaboration between clients with similar data (Clients 1 and 2) is beneficial, whereas cooperation between clients with diverse data (Clients 2 and 3) is detrimental.
}
    \label{illustration of cooperation}
\end{figure}

\section{Related Works}

DFL is a peer-to-peer communication learning paradigm, with clients connecting solely to their neighbors~\citep{liu2022federated, kang2022blockchain, li2022effectiveness}. Neighbor selection has been shown to be an important challenge for DFL~\citep{hegedHus2019gossip, onoszko2021decentralized, sui2022find}. In this work, we address this challenge for DFL with a novel collaboration strategy AFIND+. A more comprehensive discussion of the related work can be found in Appendix~\ref{sec app related works}.

\section{AFIND+: Adaptive Collaboration for DFL}


In this section, we first define the problem setup and necessary notations for DFL. Then, we introduce AFIND+ (Algorithm~\ref{alg:algorithm}), which includes finding the right neighbors via coreset (Sec~\ref{sec identify neighbors}), setting an adaptive participation threshold based on confidence levels (Sec~\ref{sec adaptive threshold}), and contribution-based reweight aggregation (Sec~\ref{sec aggregation}).

\subsection{Preliminary}


\textbf{Decentralized Federated Learning (DFL).}
In a standard DFL scenario with $m$ clients, each client $i \in [m]$ possesses a local dataset $D_i$ containing $N_i$ data examples. The parameters of the model are denoted by $\mathbf{x}\in \mathbb{R}^d$, and $F_i(\mathbf{x};\xi_i)$ represents the local objective function for client $i$, corresponding to the training samples $\xi_i$. The loss function for client $i$ is given as $F_i(\mathbf{x};\xi_i)$. The typical goal in DFL involves solving the following finite-sum stochastic optimization problem:
\begin{align}
    \label{objective of DFL}
    \min_{\mathbf{x}} f(\mathbf{x}) = \frac{1}{m}\sum_{i=1}^m   F_i(\mathbf{x};\xi_i) \, .
\end{align}

In a decentralized network topology, client communication is represented by an undirected graph $\mathcal{G} = (\mathcal{N}, \mathcal{V}, \mathcal{W})$. Here, $\mathcal{N} =[1,\cdots,m]$ denotes the set of clients, $\mathcal{V} \in \mathcal{N} \times \mathcal{N}$ signifies the set of communication channels connecting two distinct clients, and the connection matrix $\mathcal{W} = [w_{i,j}]\in [0,1]^{m\times m}$ indicates the presence of communication links between any two clients~\citep{sun2022decentralized}. We define $S_i$ as the set of nodes that can communicate with client $i$, where $w_{i,j} = 1$ for all $j \in S_i$.

\textbf{DFL with personalized model.} When considering a personalized model approach for DFL, the objective can be formulated as follows:
\begin{align}
\label{personalized formulation}
    \min_{\mathbf{w},\mathbf{\beta}}\left\{ f(\mathbf{w},\mathbf{\beta}) \coloneqq \frac{1}{m}\sum_{i=1}^m  F_i(\mathbf{w},\mathbf{\beta}_i)  \right\} \, ,
\end{align}
where $\mathbf{w}\in \mathbb{R}^{d_0}$ represents the consensus model, averaged from all shared parameters $\mathbf{w}_i$, i.e., $\mathbf{w}=\frac{1}{m}\sum_{i=1}^m \mathbf{w}_i$, $\mathbf{\beta} = (\mathbf{\beta}_1, \cdots, \mathbf{\beta}_m)$ with $\mathbf{\beta}_i \in \mathbb{R}^{d_i}$, $\forall i \in [m]$ corresponds to the local parameters, and $F_i(\mathbf{w},\mathbf{\beta}_i) = \mathbb{E}_{\xi_i \sim D_i}[F_i(\mathbf{w},\mathbf{\beta}_i;\xi_i)]$. 
Stochastic gradients with respect to $\mathbf{w}_i$ and $\mathbf{\beta}_i$ are denoted as $\nabla_{\mathbf{w}}$ and $\nabla_{\mathbf{\beta}}$, respectively.

In DFL, the shared parameters $\mathbf{w}_i$ of each client $i$ are transmitted to their neighbors, denoted as $S_i$. Conversely, the personal parameters $\beta_i$ only perform multiple local iterations in each client $i$  and are not shared externally.

\begin{figure}
\centering
\includegraphics[width=.7\textwidth]{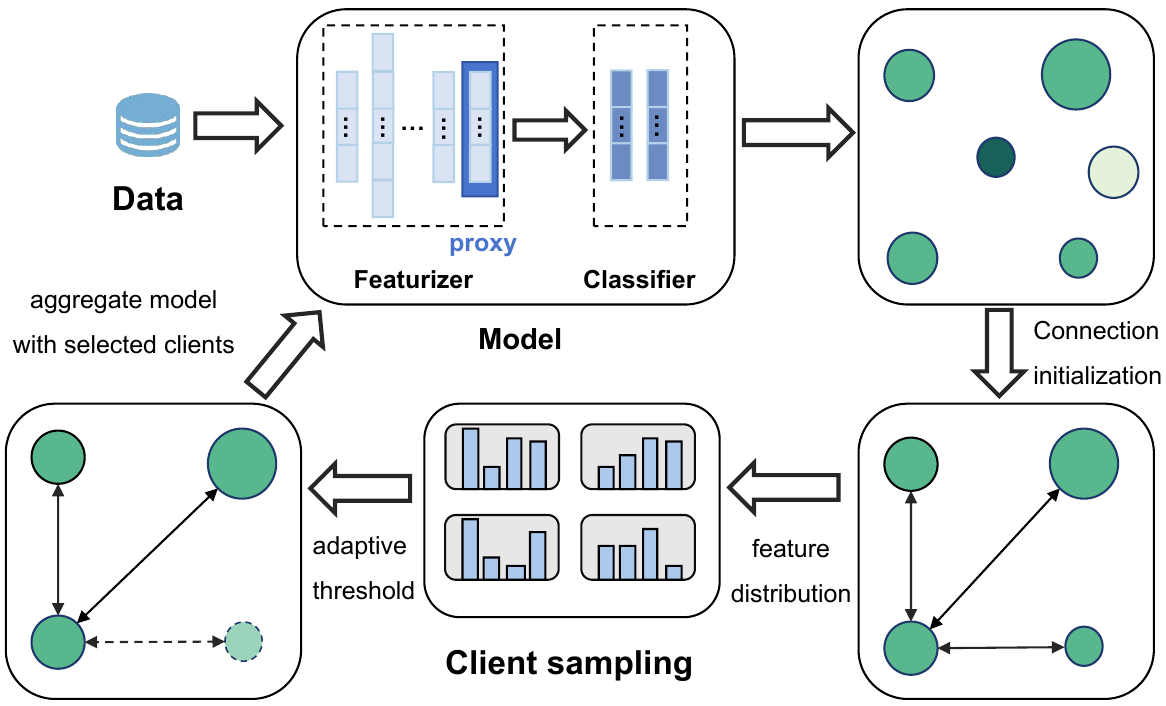}
\caption{\textbf{Illustration of the AFIND+ method.} 
The node represents the clients in the network, with different sizes and colors indicating variations in data distribution. First, the client identifies helpful neighbors with similar data distribution, represented by a similar feature proxy. Next, it filters neighbors using a threshold based on overall distribution and helpfulness. Finally, it aggregates the selected clients based on their contributions.
}
\label{illustration of workflow of AFIND}
\end{figure}

\subsection{Identify Right Neighbors}
\label{sec identify neighbors}

In DFL, the goal is to find neighbors that improve a client's performance, unlike CFL which aims to select a subset approximating full client participation~\citep{balakrishnan2021}. Motivated by the effectiveness of selective collaboration shown in Figure~\ref{illustration of cooperation}, our focus is on identifying beneficial neighbors for each client.


We have posited that selecting the correct neighbors entails choosing those with similar data distributions, as depicted in Figure~\ref{illustration of cooperation}. A similar observation was made in \cite{sui2022find}. We begin by analyzing the rationale behind this assertion from a theoretical standpoint, employing a coreset approach. Coresets are subsets of data points that effectively approximate the original dataset and are commonly utilized in active learning~\cite{sener2017active}. The coreset definition is as follows:
\begin{definition}[Coreset]
    Given a dataset $\mathcal{D}$ and a loss function $L$, a coreset $\mathcal{C}$ is a subset of $\mathcal{D}$ such that for any model $\mathbf{x}$, the following inequality holds:
\begin{align}
    \|L(\mathbf{x}, \mathcal{D})-L(\mathbf{x}, \mathcal{C})\| \leq \epsilon \, ,
\end{align}
where $\epsilon$ is a small error tolerance.
\end{definition}

In DFL, the coreset of a target client is a set of clients that have similar model update performance.
To formulate this problem, we start by following the logic in \cite{mirzasoleiman2020coresets}.
Let $S$ be a subset of $m$ clients. Furthermore, assume that there is a mapping $\varsigma_\mathbf{x}: V \rightarrow S$ that for every possible value of optimization parameter $\mathbf{x} \in \mathcal{X}$ assigns every possible neighbor $i \in V$ to one of the elements $j$ in $S$, i.e., $\varsigma_\mathbf{x}(i)=j \in S$. Let $C_j=\{i \in[m] \mid \varsigma(i)= j\} \subseteq V$ be the set of neighbors that are assigned to $j \in S$, and $\gamma_j=\left|C_j\right|$ be the number of such neighbor nodes. Hence, $\left\{C_j\right\}_{j=1}^m$ form a partition of $V$. Then, for any $\mathbf{x}$ we can write the following gradient for client $i$:
\begin{small}
\begin{align}
\nabla F_i(\mathbf{x}) &= \nabla F_i(\mathbf{x}) - \sum\nolimits_{i \in V} \nabla F_i\left(\mathbf{x}\right) + \sum\nolimits_{i \in V} \left( \nabla F_i\left(\mathbf{x}\right) 
-\nabla F_{\varsigma_\mathbf{x}(i)}(\mathbf{x})+\nabla F_{\varsigma_\mathbf{x}(i)}(\mathbf{x}) \right) \\
&=\nabla F_i(\mathbf{x}) - \sum\nolimits_{i \in V} \nabla F_i\left(\mathbf{x}\right)+ \sum\nolimits_{i \in V} [\nabla F_i(\mathbf{x})  -\nabla F_{\varsigma_\mathbf{x}(i)}(\mathbf{x})] 
+\sum\nolimits_{j \in S} \gamma_j \nabla F_j(\mathbf{x}) \, .
\end{align}
\end{small}

Subtracting, and then taking the norm of both sides, we
get an upper bound on the error of estimating the client $i$'s gradient:
\begin{small}
\begin{align}
\label{derivation for coreset of DFL}
\|  \nabla F_i(\mathbf{x})-\sum\nolimits_{j \in S}  \gamma_j \nabla F_j(\mathbf{x}) \|^2 \leq  
 2\|\nabla F_i(\mathbf{x}) - \sum\nolimits_{i \in V} \nabla F_i\left(\mathbf{x}\right)\|^2+2 \sum\nolimits_{i \in V}\left\|\nabla F_i(\mathbf{x})-\nabla F_{\varsigma_\mathbf{x}(i)}(\mathbf{x})\right\|^2 \, ,
\end{align}
\end{small}
where the inequality follows from the triangle inequality and Jensen's inequality. The upper bound in Eq.~\eqref{derivation for coreset of DFL} is minimized when $\varsigma_\mathbf{x}$ assigns every $i\in V$ to an element in $S$ with the most gradient similarity at parameter $\mathbf{x}$, or the minimum Euclidean distance between the gradient vectors at $\mathbf{x}$. Therefore, based on the above conclusion that helpful neighbors should be clients with the most gradient similarity, we can formally define the coreset of DFL as follows:

\begin{definition}[Coreset of DFL]
Give a set of clients $[m]$ and loss function $L$, a coreset of client $i$ in DFL is a subset of $[m]$ such that 
\begin{align}
\label{def for DFL coreset}
    \sum\nolimits_{j\in \mathcal{C}_i^*}\|\nabla F_i(\mathbf{x}; {D}_i) - \nabla F_j(\mathbf{x}; {D}_j) \|^2 \leq \epsilon_i \, ,
\end{align}
where $\mathcal{C}_i^*$ represents the coreset of client $i$, and $\epsilon_i$ represents the error tolerance of client $i$.
\end{definition} 

The definition of coreset in DFL implies that, within a defined tolerance error, the coreset of client $i$ should be clients with similar model gradients. Motivated by the above theoretical findings and empirical observations ( Figure~\ref{illustration of cooperation} and \cite{onoszko2021decentralized} ) — where a smaller distance among clients' data distribution (reflected in model distance) improves performance—we propose a greedy sampling approach to identify the helpful clients. 

Clients with greater similarity in model gradients have a higher probability of being sampled. To reduce computation and communication costs, we use the output layer of the clients' featurizer as a feature proxy for the client's data distribution instead of the gradient for similarity calculation. It is worth noting that using the proxy does not cause additional information leakage compared to vanilla DFL approaches, as the proxy only constitutes one layer of the entire model. Specifically, the sampling probability for client $j$ to be sampled by client $i$ is calculated as:
\begin{align}
    p_{i,j}^t = \frac{\exp(\text{sim}(\mathbf{\phi}_i^t, \mathbf{\phi}_j^t)/\upsilon)}{\sum_{k\in \mathcal{B}_i^t}\exp(\text{sim}(\mathbf{\phi}_i^t, \mathbf{\phi}_k^t)/\upsilon)}  \, ,
\end{align}
where $\mathbf{\phi}_i^t$ represents the feature proxy of client $i$ at round $t$, and $\mathcal{B}_i^t$ represents the available neighbors of client $i$ at round $t$. Temperature $\upsilon$ controls the distribution shape. The 'sim' refers to the cosine similarity between two vectors.

In practice, only a subset of clients are available in each round; thus, we let
\begin{align}
    p_{i,j}^t = \frac{\exp(\text{sim}(\mathbf{\phi}_i^t, \mathbf{\phi}_j^t)/\upsilon)}{\sum_{k\in \mathcal{C}_i^t}\exp(\text{sim}(\mathbf{\phi}_i^t, \mathbf{\phi}_k^t)/\upsilon)} \left(1 - \sum\nolimits_{i \in \mathcal{B}_i^t/\mathcal{C}_i^t} p_{i,t}^t \right) \, ,
\label{practical sampling}
\end{align}
where $\mathcal{C}_i^t$ is the selected coreset for client $i$ at round $t$ and the multiplicative factor ensures that all probabilities sum to 1.

By $p_{i,j}^t$ we can identify helpful neighbors for client $i$, i.e., clients with a higher probability for collaboration. This is a greedy selection process. However, determining its termination conditions is challenging.
To address this, we propose an adaptive threshold to control the number of participating clients, as detailed in the next section.


\begin{algorithm}[ht]
    \caption{AFIND+}
    \label{alg:algorithm}
    \textbf{Input}:Number of clients $m$, global threshold $\tau$, learning rate $\eta_{\mathbf{w}}$ and $\eta_{\beta}$, number of local epoch $K_{\mathbf{w}}$ and $K_{\beta}$, total training rounds $T$. \\
    \textbf{Output}: Final model parameter $\mathbf{w}_i^T$ and $\beta_i^T$ \\
    \textbf{Initialize:} Shared model $\mathbf{w}_i^0$ and personalized model $\beta_i^0$. $p_{i,j} = \frac{1}{n_i}$ where $n_i = |\mathcal{B}_{i}^0|$ is the connected neighbors of client $i$, $\forall i \in [m]$.
    \begin{algorithmic}[1] 
    \FOR{$t=0$ to $T-1$}
        \FOR{client $i$ in parallel}
            \IF{ $p_{i,j}^t \geq \theta_i^t$, for $j$ in $\mathcal{B}_i^t$}
                \STATE Add neighbor $j$ into the coreset $C_i^t$ for collaboration
            \ENDIF
            \FOR{client $j$ in $C_i^t$}
                \STATE Let $\mathbf{w}_j^{t,0} = w_i^t$, calculate $\mathbf{\phi}_j^t \leftarrow  [\nabla_{\mathbf{w}} F_j(\mathbf{w}_i^t,  \beta_j^{t};\xi_j)]_{\phi} $ and $F_j(\mathbf{w}_i^t,  \beta_j^{t};\xi_j)$;
                \STATE  \colorbox{shadecolor}{\emph{Local Update $\mathbf{w}_j$ and $\beta_j$}} and calculate  $\tilde{F}_j(\mathbf{w}_i^t,  \beta_j^{t};\xi_j)$ based on Eq.~\eqref{moving average};
                \STATE Communicate $\phi_j^t$, $w_j^{t+1}$, and $\tilde{F}_j(\mathbf{w}_i^t,  \beta_j^{t};\xi_j)$ to client $i$ 
            \ENDFOR
            \STATE $\mathbf{\phi}_i^t \leftarrow  [\nabla_{\mathbf{w}} F_i(\mathbf{w}_i^t,  \beta_i^{t};\xi_i)]_{\phi} $
            \STATE \colorbox{shadecolor}{\emph{Local Update $\mathbf{w}_i$ and $\beta_i$;}}
            \STATE Aggregation: $\mathbf{w}_i^{t+1} = \sum_{j\in C_i^t} \varpi_{i,j}^t \mathbf{w}_j^{t+1}$, where $\varpi_{i,j}^t$ follows from Eq.~\eqref{eq agg};
            \STATE Update $p_{i,j}^t$ and $\theta_i^t$ based on Eq.~\eqref{practical sampling} and Eq.\eqref{threshold eq}, respectively;
        \ENDFOR
    \ENDFOR
\vspace{-.5em}
\begin{shaded}  
\vspace{-.5em}
    { \ ~~~~~~~~~~~~~~~ /* \emph{Local Update $\mathbf{w}$ and $\beta$} */}
            \FOR{$k=0$ to $K_{\mathbf{\beta}}-1$ }
                \STATE    $\beta_i^{t,k+1} = \beta_i^{t,k}-\eta_{\beta}\nabla_{\beta}F_i(\mathbf{w}_{i}^t,\beta_{i}^{t,k}; \xi_i) $
            \ENDFOR
            \STATE $\beta_i^{t+1} \leftarrow \beta_i^{t,K_{\mathbf{w}}}$
            \FOR{$k=0$ to $K_{\mathbf{w}}-1$ }
                \STATE $\mathbf{w}_i^{t,k+1} = \mathbf{w}_i^{t,k} -\eta_{\mathbf{w}} \nabla_{\mathbf{w}} F_i(\mathbf{w}_{i}^{t,k},\beta_{i}^{t+1}; \xi_i)$;
            \ENDFOR 
\vspace{-.5em}
\end{shaded}
\vspace{-.5em}
    \end{algorithmic}
    \vspace{-.1em}
\end{algorithm}

\subsection{Adaptive Threshold for Flexible Neighbor Participation}
\label{sec adaptive threshold}

We aim to select all helpful clients for collaboration in each round. However, as noted in \textbf{Challenge 2}, we don't know the number of helpful neighbors in each round. Simply fixing the number of collaborators can lead to poor performance, and fine-tuning this number during training is impractical.

To this end, we propose an adaptive threshold that assesses the overall distribution similarity between a client and all its available neighbors, updating as training progresses. Intuitively, if a client's neighbors tend to be more similar, the similarity threshold should adaptively decrease, allowing for a larger sampling pool. Conversely, if the neighbors are less similar to the client, the threshold should increase, potentially reducing the number of selected clients.

Motivated by the principles from GNN-FD~\cite{zheng2023adaptive}, the threshold setting should correlate with the model’s confidence in the overall similarity level between the client and its neighbors, as well as the similarity to individual neighbors, mirroring the model's learning state. To determine the confidence $\tilde{C}_i^t$ of client $i$, we first calculate the entropy using the following equation:
\begin{align}
    h_i^t = -\sum\nolimits_{j\in C_i} e^t_{i,j}\log(e^t_{i,j}) \, ,
\end{align}
where $h_i^t$ represents of node $i$'s uncertainty and $e_{i,j}^t\in (0,1)$ represents the refined consine similarity that comes from $e_{i,j}^t=\frac{\text{sim}(\phi_i,\phi_j)+1}{2}$.
Normalize $H^t = (h_1^t, \cdots,h_m^t)$ to obtain $\tilde{H}^t$ as follows:
\begin{align}
\tilde{H^t} = \sigma (H^t) \, ,
\end{align}
where $\sigma (X) = \frac{1}{1+e^{-X}}$ is the sigmoid function. 
$\tilde{H}_i^t \in (0,1)$ represents the $i$-th element of $\tilde{H}^t$.

A higher value of $\tilde{H}_i$ suggests that client $i$ is more challenging to distinguish, the neighbors with diverse distribution are lower. The confidence $\tilde{C}_{i}^t$ of client $i$ is then obtained by:
\begin{align}
    \tilde{C}_{i}^t = 1-\tilde{H}^t_i \, .
\end{align}

With the confidence of the client given, we can now set the threshold $\theta_{i}$ for client $i$ as:
\begin{align}
\label{threshold eq}
    \theta_{i}^t = \tau \tilde{C}_{i}^t \, ,
\end{align}
where $\tau$ represents the global threshold, a constant. By comparing this threshold with $p_{i,j}^t$, we can determine the termination condition for the proposed greedy selection process and enable a flexible number of participating clients.

\begin{remark}
    Since $\tilde{C}_{i}^t\in (0,1)$, the threshold $\theta_{i}$ will not exceed $\tau$. Higher $\tilde{C}_{i}$ values indicate closer alignment of neighbor distributions with the client, resulting in a smaller threshold and a greater chance of neighbor participation, potentially increasing the number of participating neighbors.
\end{remark}

\subsection{Contribution-Awareness Aggregation}
\label{sec aggregation}

While we successfully identify and select friendly neighbors for collaboration, variations in data distributions among sampled users lead to differing contributions to cooperation. To utilize the selected neighbors' information more effectively, we introduce an aggregation strategy based on the Boltzmann distribution, using the loss of sampled clients to quantify their contributions. This aggregation strategy evaluates the impact of a client's distribution on the model $\mathbf{x}$, as expressed in the following formulation:
\begin{align}
    \varpi_{i,j}^t = \frac{e^{-F_j(\mathbf{w}_i,\beta_j;\xi_j)/T}}{Z} \, ,
\end{align}
where $T$ is the temperature and $Z= \sum_{i\in C_i} e^{-F_j(\mathbf{w}_i,\beta_j;\xi_j)/T}$ is the partition function. In this paper, we simply use $T=1$ to obtain satisfactory results.

To enhance performance by smoothing loss between clients, we propose a moving average of current and previous round losses:
\begin{align}
    \tilde{F}_j(\mathbf{w}_i^t,\beta_j^t;\xi_j) =  (1-\gamma)  F_j(\mathbf{w}_i^{t,K},\beta_j^{t,K};\xi_j) + \gamma F_j (\mathbf{w}_i^t,\beta_j^t;\xi_j) \, ,
    \label{moving average}
\end{align}
where $\gamma$ is a constant.
Correspondingly, the aggregation distribution should be:
\begin{align}
\label{eq agg}
    \varpi_{i,j}^t = \frac{e^{-\tilde{F}_j(\mathbf{w}_i^t,\beta_j^t;\xi_j)/T}}{Z} \, .
\end{align}

\begin{remark}
The Boltzmann distribution balances contributions from different neighbors probabilistically. Better-performing neighbors have a higher influence on the client model, but it also allows for contributions from less-performing devices.
\end{remark}

\vspace{-.5em}
\section{Convergence Analysis}
\vspace{-.2em}
\label{sec convergence}

To ease the theoretical analysis of our work, we use the following widely used assumptions:

\begin{assumption}[Smoothness]
\label{assumption 1}
    For each client $i=\{1, \ldots, m\}$, the function $F_i$ is continuously differentiable. There exist constants $L_\mathbf{w}, L_{\beta}, L_{\mathbf{w} \beta}, L_{\beta \mathbf{w}}$ such that for each client $i=\{1, \ldots, m\}$ :
$\nabla_{\mathbf{w}} f_i\left(\mathbf{w}_i, \beta_i\right)$ is $L_{\mathbf{w}}$-Lipschitz with respect to $\mathbf{w}$ and $L_{\mathbf{w} \beta}$-Lipschitz with respect to $\beta_i$;
$\nabla_{\beta} f_i\left(\mathbf{w}_i, \beta_i\right)$ is $L_{\beta}$-Lipschitz with respect to $\beta_i$ and $L_{\beta \mathbf{w}}$-Lipschitz with respect to $\mathbf{w}_i$.
\end{assumption}

\begin{assumption}[Bounded Variance]
\label{assumption 2}
The stochastic gradients $\nabla_w \hat{f}_i\left(w, \beta_i; \xi \right), \nabla_\beta \hat{f}_i\left(w, \beta_i; \xi\right)$ satisfy for all $i\in[m], w \in \mathbb{R}^{d_0}$, $\beta_i \in \mathbb{R}^{d_i}:$
\begin{align}
 \mathbb{E}\left[\left\|\nabla_w \hat{f}_i\left(w, \beta_i; \xi \right)-\nabla_w f_i\left(w, \beta_i\right)\right\|^2\right] \leq  
A_1\left\|\nabla_w f_i\left(w, \beta_i\right)\right\|^2+\sigma_{w}^2, \\
 \mathbb{E}\left[\left\|\nabla_\beta \hat{f}_i\left(w, \beta_i; \xi\right)-\nabla_\beta f_i\left(w, \beta_i\right)\right\|^2\right] \leq
A_2\left\|\nabla_\beta f_i\left(w, \beta_i\right)\right\|^2+\sigma_{\beta}^2,
\end{align}
for all $i \in[m]$, where $A_1, A_2, \sigma_{w}, \sigma_{\beta}$ are all positive constants.
\end{assumption}

\begin{assumption}[Bounded Dissimilarity]
\label{assumption 3}
     There is a positive constant $\lambda>0$ such that for all $w \in \mathbb{R}^{d_0}$ and $\beta_i \in \mathbb{R}^{d_i}, i \in[i]$, we have
\begin{align}
\frac{1}{m} \sum_{i=1}^m\left\|\nabla f_i\left(w, \beta_i\right)\right\|^2 \leq \lambda\left\|\frac{1}{m} \sum_{i=1}^m \nabla f_i\left(w, \beta_i\right)\right\|^2+\sigma_{G}^2 .
\end{align}
\end{assumption}

The above three assumptions are commonly used in both non-convex optimization and FL literature, see e.g.~\cite{sun2024adasam,yang2021achieving,koloskova2020unified,lin2021quasi,haddadpour2019convergence,hanzely2021personalized}. For Assumption~\ref{assumption 1}, to simplify the notation, we are making $L$ as a consistent upper bound for $L_{\mathbf{w}}$ and $L_{\beta}$.
For Assumption~\ref{assumption 3}, if all local loss functions are identical (homogeneous distribution), then we have $\sigma_{G} = 0$. 

Given the above assumptions, we can establish the following convergence rate of AFIND+ (Algorithm~\ref{alg:algorithm}) for general nonconvex objectives.

\begin{theorem}
\label{theorem 1}
    Under Assumptions 1 to 3, and let local learning rate $\eta_k=\eta$, for all $k \geq 0$, where $\eta$ is small enough to satisfy
$
\eta L \lambda\left(\frac{A_1}{m}+A_2+1\right)+\lambda \eta^2 L^2(K-1) K\left(A_1+1\right) -1\leq 0.
$
The convergence upper bound of Algorithm~\ref{alg:algorithm} AFIND+ satisfies:
\begin{align}
 \frac{1}{t} \sum_{t=0}^{T-1} \mathbb{E}\left[\left\|\frac{1}{m} \sum_{i=1}^m \nabla f_i\left(w^t, \beta_i^t\right) \right\|^2\right]  
\leq \frac{2F}{\eta T}+\Phi \, ,
\end{align}
where 
\begin{align}
\Phi = \eta L \lambda\left\{\left(\frac{A_1}{m}+A_2+1\right) \sigma_{G}^2+\frac{\sigma_{w}^2}{m}+\sigma_{\beta}^2\right\} 
 +\eta^2 L^2 \sigma_{G}^2(K-1)^2\left(A_1+1\right)+\eta^2 L^2 \sigma_{w}^2(K-1)^2 \, ,
\end{align}
where $F=\mathbb{E}\left[\frac{1}{m} \sum_{i=1}^m f_i\left(w^0, \beta_i^0\right)-f^*\right]$ 
 and $w^t:=\frac{1}{m} \sum_{i=1}^m w_i^t$ is a sequence of so-called virtual iterates.
\end{theorem}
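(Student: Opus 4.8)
The plan is to derive a one-step descent inequality for the global objective $f(w^t,\beta^t)=\frac{1}{m}\sum_{i=1}^m f_i(w^t,\beta_i^t)$ evaluated along the virtual iterate $w^t=\frac{1}{m}\sum_{i=1}^m w_i^t$, and then to telescope it over $t=0,\dots,T-1$. Because the algorithm updates the personalized block $\beta_i$ and the shared block $w_i$ in separate inner loops, the first move is to split the per-round progress as $f(w^{t+1},\beta^{t+1})-f(w^t,\beta^t) = \big[f(w^{t+1},\beta^{t+1})-f(w^t,\beta^{t+1})\big] + \big[f(w^t,\beta^{t+1})-f(w^t,\beta^t)\big]$, and to bound each bracket with the block-wise smoothness of Assumption~\ref{assumption 1}: the second bracket via $L_\beta$-smoothness in $\beta$ and the first via $L_\mathbf{w}$-smoothness in $w$, with the cross-Lipschitz constants $L_{\mathbf{w}\beta},L_{\beta\mathbf{w}}$ appearing when the two updates interact. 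Collapsing $L_\mathbf{w},L_\beta$ into the common bound $L$, as the statement does, keeps the bookkeeping tractable.

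I would then treat each block separately. For the $\beta$-block I unroll the $K_\beta$ local SGD steps of Algorithm~\ref{alg:algorithm}, take expectations, and invoke Assumption~\ref{assumption 2} to convert $\mathbb{E}\|\nabla_\beta\hat f_i-\nabla_\beta f_i\|^2$ into $A_2\|\nabla_\beta f_i\|^2+\sigma_\beta^2$; this is what seeds the $A_2$ and $\sigma_\beta^2$ terms of $\Phi$. For the $w$-block I repeat this with the $K_\mathbf{w}$ local steps, producing $A_1/m$ and $\sigma_w^2/m$, the $1/m$ reflecting the averaging across the $m$ shared models inside the virtual iterate, and then account for the aggregation $w_i^{t+1}=\sum_{j}\varpi_{i,j}^t w_j^{t+1}$. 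The crucial simplification is that, tracked through the virtual average, the contribution-aware weights act as a row-stochastic mixing step, so the update to $w^t$ reduces to an average of per-client local steps; this is why no spectral-gap penalty survives in the final bound. To control the discrepancy between the local iterates $w_i^{t,k}$ and the average $w^t$ accumulated over the inner loop, I bound the client drift $\sum_k\mathbb{E}\|w_i^{t,k}-w^t\|^2$; unrolling this recursion over the inner steps is exactly what generates the $(K-1)$ and $(K-1)^2$ multipliers in $\Phi$.

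With the stochastic and drift terms in hand, I apply Assumption~\ref{assumption 3} to replace $\frac{1}{m}\sum_i\|\nabla f_i(w^t,\beta_i^t)\|^2$ by $\lambda\|\frac{1}{m}\sum_i\nabla f_i\|^2+\sigma_G^2$, which couples the target quantity to the descent and feeds the remaining $\sigma_G^2$ contributions. Collecting everything, the one-step inequality takes the schematic form $\mathbb{E}[f(w^{t+1},\beta^{t+1})]\le \mathbb{E}[f(w^t,\beta^t)] - \tfrac{\eta}{2}\big(1-[\eta L\lambda(\tfrac{A_1}{m}+A_2+1)+\lambda\eta^2L^2(K-1)K(A_1+1)]\big)\,\mathbb{E}\|\tfrac{1}{m}\sum_i\nabla f_i\|^2 + \tfrac{\eta}{2}\Phi$. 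The step-size condition assumed in the statement is precisely what forces the bracketed coefficient to be nonnegative, so the gradient term can be moved to the left with a positive weight. Telescoping from $t=0$ to $T-1$, using $\sum_t(\mathbb{E}f^t-\mathbb{E}f^{t+1})\le F$, and dividing by $\eta T$ then yields the stated $\frac{2F}{\eta T}+\Phi$ bound; substituting $\eta=\Theta(1/\sqrt T)$ recovers the $\mathcal{O}(1/\sqrt T + 1/T)$ rate.

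I expect the main obstacle to be the two-block coupling combined with the local-step drift: keeping the cross terms arising from $L_{\mathbf{w}\beta},L_{\beta\mathbf{w}}$ under control while simultaneously unrolling both inner loops, so that the accumulated drift closes into the clean $(K-1),(K-1)^2$ dependence instead of compounding. A secondary and more delicate subtlety is justifying that the contribution-aware aggregation can be folded into the virtual-iterate descent without a consensus-error term; this requires verifying that, averaged over clients, the mixing weights $\varpi_{i,j}^t$ preserve (or at least do not amplify) the mean of the shared models, a property that must be checked rather than assumed.
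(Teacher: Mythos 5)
Your proposal is correct in outline and shares the skeleton of the paper's proof --- the virtual iterate $w^t=\frac{1}{m}\sum_{i=1}^m w_i^t$, a smoothness-based one-step descent, variance control via Assumption~\ref{assumption 2} seeding exactly the $\frac{A_1}{m}$, $A_2$, $\frac{\sigma_w^2}{m}$, $\sigma_\beta^2$ terms (the paper's Lemma~\ref{lemma 2}), a client-drift bound unrolled over the inner loop producing the $(K-1)$ and $(K-1)^2$ multipliers (Lemma~\ref{lemma 4}), Assumption~\ref{assumption 3} to pass to $\lambda\left\|\frac{1}{m}\sum_i\nabla f_i\right\|^2+\sigma_G^2$, the step-size condition signing the gradient coefficient, and telescoping --- but it genuinely diverges on two points. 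First, you split each round into two sequential brackets bounded by block-wise smoothness, anticipating cross terms in $L_{\mathbf{w}\beta},L_{\beta\mathbf{w}}$; the paper instead analyzes an idealized \emph{simultaneous} update $\left(w_i^{t+1},\beta_i^{t+1}\right)=\left(w_i^t,\beta_i^t\right)-\eta_t g_i^t$ with a single application of joint $L$-smoothness (Lemma~\ref{lemma 1}), converting the resulting inner products into squared norms plus the drift penalty $\frac{\eta L^2}{2}V^t$ via the polarization identity (Lemma~\ref{lemma 3}), so the cross-Lipschitz constants of Assumption~\ref{assumption 1} never actually enter. Your route is more faithful to Algorithm~\ref{alg:algorithm}'s two-phase inner loop ($\beta$ for $K_\beta$ steps, then $\mathbf{w}$ for $K_{\mathbf{w}}$ steps) but buys that fidelity at the cost of cross terms and a messier step-size condition; the paper's route is cleaner but proves the theorem for a surrogate update rule. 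Second, on aggregation you are more careful than the paper: the appendix proof simply posits exact consensus at period boundaries ($w^{t_p}=w_i^{t_p}$ for all $i\in[m]$) and never mentions the weights $\varpi_{i,j}^t$, i.e., it silently idealizes the contribution-aware aggregation as exact periodic averaging, whereas you correctly identify mean preservation under the row-stochastic weights as a property to be verified rather than assumed --- and indeed the Boltzmann weights of Eq.~\eqref{eq agg} are row- but not doubly stochastic, so the global mean of the shared models is in general \emph{not} preserved. Neither your plan nor the paper closes that gap, but your proposal is the only one of the two that names it.
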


The variables, such as variance term $\sigma_{\mathbf{w}},\sigma_{\beta},\sigma_{G}$, and Lipschitz constant $L$, all control the convergence bound.
Let  $\eta = \mathcal{O}\left(\frac{1}{\sqrt{T}KL}\right)$, the convergence rate of AFIND+ is:
\begin{small}
\begin{align}
     \frac{1}{t} \sum_{t=0}^{T-1} \mathbb{E}\left[\left\|\frac{1}{m} \sum_{i=1}^m \nabla f_i\left(w^t, \beta_i^t\right) \right\|^2\right] \leq \mathcal{O}\left(\frac{1}{\sqrt{T}}+\frac{1}{T}\right) \, .
\end{align}
\end{small}
Due to space limitations, further details and complete proof are deferred to Appendix A.

\begin{table*}[!t]
 \centering
 \caption{\small \textbf{Performance improvement of AFIND+. } 
 We compare the performance of different sampling methods in DFL on CIFAR-10 and CIFAR-100 with both Dirichlet and Pathological distributions. The $\alpha$ represents the Dirichlet parameter and $c$ represents sampling classes of pathological partition. The data is divided into 100 clients, with 10 clients sampled in each round for baselines. Each experiment comprises 500 communication rounds, with the number of local epochs set to 5. We measure the average test accuracy of all clients in each communication round and report the best performance attained across all rounds. The results are then averaged over three seeds. We highlight the best results by using \textbf{bold font}.} 
 \fontsize{7.6}{11}\selectfont 
 \resizebox{1.\textwidth}{!}{%
  \begin{tabular}{l l l l l l l l l c c}
   \toprule
   \multirow{3}{*}{Algorithm} & \multicolumn{4}{c}{CIFAR-10 } & \multicolumn{4}{c}{CIFAR-100 }\\
   \cmidrule(lr){2-5} \cmidrule(lr){6-9}   
                    &\multicolumn{2}{c}{Dirichlet}   & \multicolumn{2}{c}{Pathological}    &\multicolumn{2}{c}{Dirichlet}   &\multicolumn{2}{c}{Pathological}  \\
      \cmidrule(lr){2-3} \cmidrule(lr){4-5} \cmidrule(lr){6-7} \cmidrule(lr){8-9}       
                    & $\alpha = 0.1$ & $\alpha=0.5$      & $c=2$ & $c=5$ & $\alpha = 0.1$ & $\alpha=0.5$      & $c=5$ & $c=10$ \\
   \midrule
Local                      & 48.44{\transparent{0.5}±0.44} & 62.36{\transparent{0.5}±0.51}  & 86.52{\transparent{0.5}±0.56} & 70.96{\transparent{0.5}±0.48} & 32.21{\transparent{0.5}±0.52} & 29.52{\transparent{0.5}±1.12} & 75.43{\transparent{0.5}±0.89} & 53.06{\transparent{0.5}±0.90}\\ 
    FedAvg             & 55.29{\transparent{0.5}±0.04} & 78.24{\transparent{0.5}±0.21} & 59.81{\transparent{0.5}±0.41} & 56.87{\transparent{0.5}±0.58} & 45.42{\transparent{0.5}±0.26} & 46.48{\transparent{0.5}±1.29} & 27.20{\transparent{0.5}±0.81} & 35.29{\transparent{0.5}±0.62}\\
    Gossip & 59.47{\transparent{0.5}±0.09} & 80.74{\transparent{0.5}±0.11} & 85.31{\transparent{0.5}±0.30} & 73.46{\transparent{0.5}±0.36} & 56.74{\transparent{0.5}±0.46} & 55.77{\transparent{0.5}±0.69} & 79.54{\transparent{0.5}±0.50} & 64.23{\transparent{0.5}±0.73}\\
    PENS          & 62.66{\transparent{0.5}±0.19} & 83.12{\transparent{0.5}±0.12} & 88.03{\transparent{0.5}±0.31} & 75.83{\transparent{0.5}±0.34} & 57.91{\transparent{0.5}±0.53} & 57.89{\transparent{0.5}±1.88} & 82.65{\transparent{0.5}±0.84} &67.03{\transparent{0.5}±1.21}\\ 
    FedeRiCo          & 66.31{\transparent{0.5}±0.07} & 84.50{\transparent{0.5}±0.10} & 88.26{\transparent{0.5}±0.32} & 76.82{\transparent{0.5}±0.14} & 57.83{\transparent{0.5}±0.79} & 57.65{\transparent{0.5}±1.03} & 82.79{\transparent{0.5}±1.45} & 68.22{\transparent{0.5}±1.26}\\ 
    AFIND+       & \textbf{71.89}{\transparent{0.5}±0.06} & \textbf{88.82}{\transparent{0.5}±0.16} & \textbf{91.13}{\transparent{0.5}±0.27} & \textbf{80.36}{\transparent{0.5}±0.32} & \textbf{62.11}{\transparent{0.5}±1.82} & \textbf{61.02}{\transparent{0.5}±1.29}& \textbf{84.87}{\transparent{0.5}±0.81} &\textbf{71.07}{\transparent{0.5}±1.02} \\ 
 \bottomrule
\end{tabular}
}
\label{table of performance comparison}
\end{table*}

\begin{table}
\centering
\caption{ 
\textbf{Ablation study for AFIND+.} (i) Identification: Greedy sampling using a fixed number of participation. (ii) Threshold: Enable a flexible number of participation by confidence level threshold. (iii) Aggregation: Contribution-based aggregation.
} 
\fontsize{7.6}{11}\selectfont 
\resizebox{.6\textwidth}{!}{%
  \begin{tabular}{l l l l l l l l l c c}
   \toprule
   \multirow{2}{*}{Algorithm} & \multicolumn{2}{c}{CIFAR-10} & \multicolumn{2}{c}{CIFAR-100}\\
   \cmidrule(lr){2-3} \cmidrule(lr){4-5} 
                    & $\alpha=0.1$ & $c=5$     &  $\alpha=0.1$ & $c=10$\\
   \midrule
FedAvg-FT  & 59.47{\transparent{0.5}±0.09} & 73.46{\transparent{0.5}±0.36} & 32.21{\transparent{0.5}±0.52} & 64.23{\transparent{0.5}±0.73} \\ 
\quad + \emph{i} & 65.57{\transparent{0.5}±0.12} & 76.96{\transparent{0.5}±0.53} & 57.25{\transparent{0.5}±0.99}  &68.10{\transparent{0.5}±1.47}  \\ 
\quad + \emph{i} + \emph{ii} & 68.89{\transparent{0.5}±0.08} & 78.95{\transparent{0.5}±0.20} & 58.02{\transparent{0.5}±0.83}  &69.60{\transparent{0.5}±0.91}  \\ 
\quad + \emph{i} + \emph{ii} + \emph{iii} &71.89{\transparent{0.5}±0.06} & 80.36{\transparent{0.5}±0.32} & 62.11{\transparent{0.5}±1.82}  &71.07{\transparent{0.5}±1.02} \\ 
\bottomrule
\end{tabular}
}
\label{table of ablation of algorithm}
\end{table}

\begin{figure}[t]
\centering
    \begin{center}
\centerline{\includegraphics[width=0.5\columnwidth]{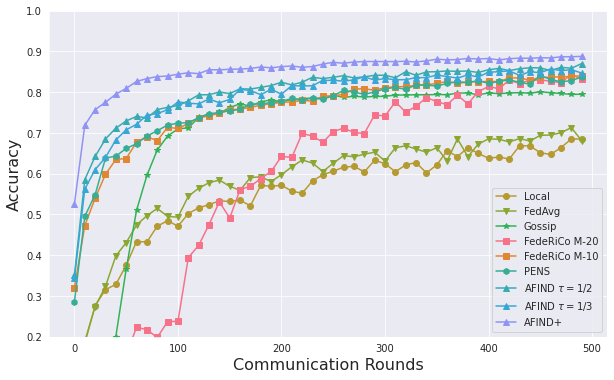}}
    \caption{\textbf{Comparison of convergence performance of DFL using different collaboration strategies.} AFIND refers to our method AFIND+ with uniform aggregation.}
    \label{Fig of convergence performance}
    \end{center}
\end{figure}

\begin{figure}[t]
    \centering
     \subfigure[Performance under Dirichlet non-IID \label{intro different cooperation}]{ \includegraphics[width=.4\textwidth,]{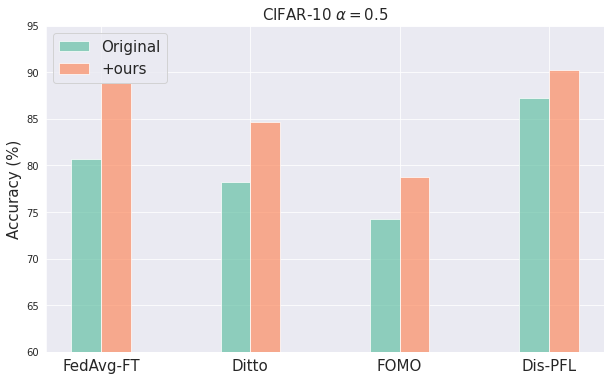}}
    \subfigure[Performance under pathological non-IID]{\includegraphics[width=.4\textwidth,]{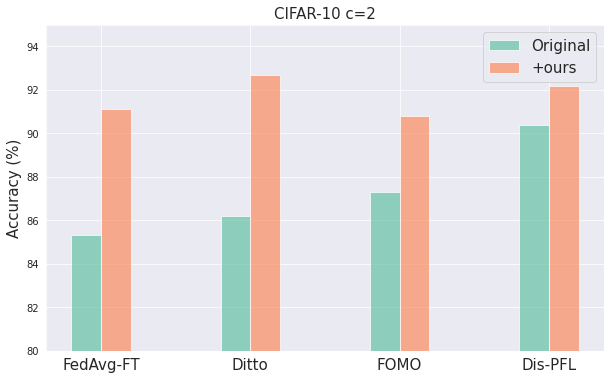}}
    \caption{Improving the performance of Decentralized PFL methods compatible with AFIND+.
}
    \label{performance of integrated with PFL}
\end{figure}

\section{Experiments}

\label{sec experiment}
In this section, we conduct experiments to validate the effectiveness of AFIND+. Detailed implementation and additional results are provided in Appendix due to space limitations.

\subsection{Experimental Setup}

\textbf{Datasets and Data partition.} We evaluate the performance of the proposed algorithm on three real datasets: FEMNIST~\citep{caldas2018leaf}, CIFAR-10, and CIFAR-100~\citep{krizhevsky2009learning}. We consider two different scenarios for simulating non-identical distribution (non-IID) of data across federated learning, following~\cite{dai2022dispfl}. \textbf{(1) Dirichlet Partition:} we partition the training data according to a Dirichlet distribution Dir($\alpha$) for each client following the same distribution. We use $\alpha =0.1$ and $\alpha=0.5$ for all datasets. The smaller the $\alpha$, the more heterogeneous the setting is. \textbf{(2) Pathological Partition:} we also evaluate with the pathological partition setup, as in \cite{zhang2020personalized}, where each client is assigned a limited number of classes at random. We sample 2 and 5 classes for CIFAR-10, and 5 and 10 classes for CIFAR-100. The number of sampled classes is denoted as $c$ in this paper.
\looseness=-1

\textbf{Baselines and Models.} We compare our methods with a diverse set of baselines, both from decentralized federated learning and personalized learning. A simple baseline called \emph{Local} is implemented with each client separately, only conducting local training on their own data. Additionally, we include the centralized method FedAvg. For the sampling method of DFL, we list the existing and SOTA sampling methods, including the random method gossip sampling~\citep{hegedHus2019gossip}, the performance-based method PENS~\citep{onoszko2021decentralized}, and FedeRiCo~\citep{sui2022find}. In addition, we test the proposed method's effectiveness in integrating with personalized FL algorithms, such as \citep{li2021ditto}, Fed-FT~\citep{cheng2021fine}, FOMO~\citep{zhang2020personalized}, and Dis-PFL~\citep{dai2022dispfl}.All accuracy results are reported as the mean and standard deviation across three different random seeds. Unless specified otherwise, AFIND refers to AFIND+ without aggregation, with a global threshold $\tau=0.5$ and momentum $\gamma=0.9$. We use a CNN model for FEMNIST and CIFAR-10, and ResNet-18 for CIFAR-100. More training settings are detailed in Appendix~\ref{app experiment setup}.

\subsection{Performance Evaluation}
\textbf{AFIND+ achieves notable accuracy improvement over other sampling methods and local training methods in DFL.} As shown in Table~\ref{table of performance comparison}, AFIND+ outperforms other baselines with the best accuracy across different datasets and data heterogeneity scenarios. Specifically, on CIFAR-10, AFIND+ achieves 71.89\% and 88.82\% accuracy for Dirichlet $\alpha=0.1$ and $\alpha=0.5$, respectively, which is 5.32\% and 4.32\% higher than the best-competing method, FedeRiCo. Similarly, for Pathological datasets, AFIND+ achieves 91.13\% and 80.36\% accuracy, outperforming the state-of-the-art FedeRiCo by 2.87\% and 3.54\% for $c=2$ and $c=5$, respectively. On CIFAR-100, AFIND+ shows at least a 3.37\% improvement for Dirichlet and a 2.08\% improvement for Pathological over state-of-the-art results. We attribute this improvement to AFIND+'s ability to identify the right collaborators, assign aggregation weights based on their contributions, and adaptively adjust the search space to include appropriate collaborators while excluding inappropriate ones, as demonstrated in the ablation study for AFIND+.

\textbf{AFIND+ converges faster than other sampling methods.} Figure~\ref{Fig of convergence performance} demonstrates the superior convergence performance of our sampling method on FEMNIST. Additionally, we observe that a smaller global threshold $\tau$ leads to slightly better performance, but the performance remains robust to the hyperparameter $\tau$. Moreover, when integrated with the importance-based aggregation method, AFIND+ shows much better performance than others, indicating greater efficiency in training. This implies that there are differences in contribution even among the correct collaborators being sampled, and our aggregation method has the ability to capture the contribution among collaborators.

\textbf{AFIND+ compatibility with PFL algorithms enhances performance in DFL.} Figure~\ref{performance of integrated with PFL} illustrates that AFIND+ is compatible with various PFL algorithms when applied in DFL. Despite variations, the improvement remains significant, resulting in an approximately $3\%$ increase in performance.

\textbf{All the components of AFIND+ are necessary and benefical.} We conduce the ablation study for AFIND+ in Table~\ref{table of ablation of algorithm}. It demonstrates that the identification of the right neighbors for collaboration, the adaptive threshold for client participation, and the contribution-aware aggregation method all contribute to improving the performance of DFL. FedAvg-FT refers to the personalized DFL algorithm with random sampling as the baseline, and our sampling method, aided by the threshold, significantly improves accuracy on the Dirichlet distribution by approximately $10\%$ for CIFAR-10 and  $20\%$ for CIFAR-100. Additionally, the proposed aggregation method further enhances performance by approximately $2\%$ for both CIFAR-10 and CIFAR-100.

Additional experiments demonstrate AFIND+'s superiority, including: (1) an ablation study for the threshold parameter $\tau$; (2) an ablation study for network topologies; and (3) a privacy evaluation, provided in Appendix~\ref{app additional experiments}.

\section{Conclusions, Limitations and Future Works}
\label{sec Conclusion}
In this paper, we introduce AFIND+, a novel DFL client collaboration optimization algorithm that identifies the right collaborators, allows flexible participation, and uses contribution-based aggregation. We empirically demonstrate its superiority over other DFL sampling algorithms and offer a convergence guarantee. However, ensuring its effectiveness in dynamic networks remains a challenge due to the varying availability of neighbors, which affects the search space of the sampling distribution. We leave this as a future research direction.

\clearpage
\newpage
\bibliographystyle{unsrt}  
\bibliography{references}  

\clearpage
\appendix
\onecolumn

{
 \hypersetup{linkcolor=black}
 \parskip=0em
 \renewcommand{\contentsname}{Contents of Appendix}
 \tableofcontents
 \addtocontents{toc}{\protect\setcounter{tocdepth}{3}}
}

\section{An extension of Related Works}
\label{sec app related works}

\paragraph{Decentralized Federated Learning (DFL)} aligns local models through peer-to-peer communication, with clients connecting solely to their neighbors~\citep{liu2022federated,kang2022blockchain,li2022effectiveness}. This structure minimizes single-point failure risks compared to Centralized Federated Learning (CFL), yet privacy and security concerns persist~\citep{witt2022decentral,gabrielli2023survey}. Extensive research addresses these challenges in DFL, focusing on privacy and diverse data distribution~\citep{wang2023enhancing,chen2022decentralized}. Foundational work by \cite{tan2022towards} and \cite{DBLP:journals/corr/McMahanMRA16} advances decentralized collaboration, emphasizing data privacy preservation in collaborative learning. Recent studies highlight the effectiveness of personalized models in DFL, given each client's local model~\citep{kulkarni2020survey,yu2023ironforge,wang2023confederated}. Despite these advancements, client sampling complexity in the DFL framework remains a prominent research area.

\paragraph{Client Selection in DFL} is a continuing challenge, with limited research focused on client sampling in Decentralized Federated Learning. \cite{hegedHus2019gossip} employs gossip sampling, which randomly selects clients. \cite{onoszko2021decentralized} suggests a heuristic, performance-based sampling method, where clients share model parameters with others to evaluate loss values; a lower loss value indicates a more preferable client for sampling. \cite{sui2022find} utilizes expectation maximization in a personalized DFL algorithm, selecting neighbors based on the exponent of negative loss. While these studies agree that selecting neighbors with similar data distributions enhances DFL performance, our approach is distinct. We propose using client features as proxies for distribution similarity. Moreover, unlike these studies which focus on fixed-number sampling, we introduce an adaptive threshold for neighbor selection, allowing for a variable number of participating neighbors.

\paragraph{DFL others.}
In addition to client sampling algorithms, FL faces other challenges such as privacy preservation \citep{wang2023enhancing, zhou2023decentralized, chen2023privacy} and communication efficiency \citep{chai2023communication, almanifi2023communication, paragliola2022definition}. We show that AFIND+ is compatible with existing privacy and communication enhancement methods, as shown in Section~\ref{sec experiment}.

\section{Theoretical Analysis and Proof}
\label{Sec app analysis}
In this section, we provide the analysis of Theorem 3.5, i.e., the convergence analysis.
\subsection{Useful Lemmas}
In this subsection, we will present some useful lemmas before giving the complete convergence proof.

We start by introducing additional notation, following form \cite{hanzely2021personalized}. We set $t_p=p \cdot K$, where $K \in \mathbb{N}^{+}$is the length of the averaging period. Let $t_p=p K+K-1=t_{p+1}-1=v_p$. Denote the total number of iterations as $T$ and assume that $T=k_{\bar{p}}$ for some $\bar{p} \in \mathbb{N}^{+}$. The final result is set to be that $\hat{w}=w^T$ and $\hat{\beta}_m=\beta_i^T$ for all $i \in[m]$. We assume that the solution to (2) is $w^*, \beta_1^*, \ldots, \beta_i^*$ and that the optimal value is $f^*$. Let $w^t=\frac{1}{m} \sum_{i=1}^m w_i^t$ for all $t$. Note that this quantity will not be actually computed in practice unless $t=t_p$ for some $p \in \mathbb{N}$, where we have $w^{t_p}=w_i^{t_p}$ for all $i \in[m]$. In addition, let $\xi_i^t=\left\{\xi_{1, i}^t, \xi_{2, i}^t, \ldots, \xi_{B, i}^t\right\}$ and $\xi^t=\left\{\xi_1^t, \xi_2^t, \ldots, \xi_m^t\right\}$.
Let
\begin{align}
g_i^t=\frac{1}{B} \nabla F_i\left(w_i^t, \beta_i^t ; \xi_i^t\right),
\end{align}
where
$
\nabla F_i\left(w_i^t, \beta_i^t ; \xi_i^t\right)=\sum_{j=1}^B \nabla F_i\left(w_i^t, \beta_i^t ; \xi_{j, i}^t\right) .
$

When the gradient is unbiased, we get
\begin{align}
\mathbb{E}\left[g_i^t\right]=\nabla F_i\left(w_i^t, \beta_i^t\right) .
\end{align}

Let
$
g_{i, 1}^t=\frac{1}{B} \nabla_w F_i\left(w_i^t, \beta_i^t ; \xi_i^t\right), \quad g_{i, 2}^t=\frac{1}{B} \nabla_{\beta_i} F_i\left(w_i^t, \beta_i^t ; \xi_i^t\right),
$
so that $g_i^t=\left(\left(g_{i, 1}^t\right)^{\top},\left(g_{i, 2}^t\right)^{\top}\right)^{\top}$. 

Thus, the parameter is updated by :
\begin{align}
\left(w_i^{t+1}, \beta_i^{t+1}\right)=\left(w_i^t, \beta_i^t\right)-\eta_t g_i^t .
\end{align}

Then we define
\begin{align}
h^t=\frac{1}{m} \sum_{t=1}^m g_{i, 1}^t, \quad V^t=\frac{1}{m} \sum_{i=1}^m\left\|w_i^t-w^t\right\|^2 .
\end{align}

Then $w^{t+1}=w^t-\eta_t h^t$ for all $t$.

We denote the Bregman divergence associated with $F_i$ for $x_i$ and $\bar{x}_i$ as
\begin{align}
D_{F_i}\left(x_i, \bar{x}_i\right):=f_i\left(x_i\right)-f\left(\bar{x}_i\right)-\left\langle\nabla f_i\left(\bar{x}_i\right), x_i-\bar{x}_i\right\rangle .
\end{align}

Finally, we define the sum of residuals as
\begin{align}
r^t=\left\|w^t-w^*\right\|^2+\frac{1}{m} \sum_{i=1}^m\left\|\beta_i^t-\beta_i^*\right\|^2=\frac{1}{m} \sum_{i=1}^m\left\|\hat{x}_i^t-x_i^*\right\|^2
\end{align}
and let $\sigma_{\text {G }}^2=\frac{1}{m} \sum_{i=1}^m\left\|\nabla f_i\left(x_i^*\right)\right\|^2$.

Following the standard results in \cite{nesterov2018lectures}, we present the following proposition:

\begin{proposition}
If the function $f$ is differentiable and $L$-smooth, then
\begin{align}
\label{eq lem 1}
f(x)-f(y)-\langle\nabla f(y), x-y\rangle \leq \frac{L}{2}\|x-y\|^2 .
\end{align}
If $f$ is also convex, then
\begin{align}
\|\nabla f(x)-\nabla f(y)\|^2 \leq 2 L D_f(x, y) \forall x,y.
\end{align}

For all vectors $x, y$, we have

\begin{align}
2\langle x, y\rangle & \leq \xi\|x\|^2+\xi^{-1}\|y\|^2, \quad \forall \varepsilon>0, \\
-\langle x, y\rangle & =-\frac{1}{2}\|x\|^2-\frac{1}{2}\|y\|^2+\frac{1}{2}\|x-y\|^2 .
\label{eq prosion 2}
\end{align}

For vectors $v_1, v_2, \ldots, v_n$, by Jensen's inequality and the convexity of the map: $x \mapsto\|x\|^2$, we have
\begin{align}
\label{Jensen ineq}
\left\|\frac{1}{n} \sum_{i=1}^n v_i\right\|^2 \leq \frac{1}{n} \sum_{i=1}^n\left\|v_i\right\|^2
\end{align}
\end{proposition}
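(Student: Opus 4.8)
The statement collects four standard facts, so the plan is to prove each in turn, relying only on elementary calculus and convex-analysis identities. For the descent bound \eqref{eq lem 1}, I would write the left-hand side as an integral along the segment from $y$ to $x$, namely $f(x)-f(y)-\langle\nabla f(y),x-y\rangle=\int_0^1\langle\nabla f(y+t(x-y))-\nabla f(y),\,x-y\rangle\,dt$, and then bound the integrand by Cauchy--Schwarz together with the $L$-Lipschitz continuity of $\nabla f$, obtaining $\le\int_0^1 Lt\,\|x-y\|^2\,dt=\tfrac{L}{2}\|x-y\|^2$.

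For the co-coercivity-type bound $\|\nabla f(x)-\nabla f(y)\|^2\le 2L\,D_f(x,y)$, the plan is the usual shifted-function argument: define $g(z)=f(z)-\langle\nabla f(y),z\rangle$, which is again convex and $L$-smooth with $\nabla g(z)=\nabla f(z)-\nabla f(y)$, so that $y$ is a global minimizer of $g$. Applying the already-proved descent bound \eqref{eq lem 1} to $g$ at the gradient-step point $x-\tfrac1L\nabla g(x)$ yields $g(y)\le g\!\left(x-\tfrac1L\nabla g(x)\right)\le g(x)-\tfrac{1}{2L}\|\nabla g(x)\|^2$, and rearranging gives $\tfrac{1}{2L}\|\nabla f(x)-\nabla f(y)\|^2\le g(x)-g(y)=D_f(x,y)$, which is the claim.

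The two vector identities are immediate. For Young's inequality, I would expand the nonnegative quantity $\|\sqrt{\xi}\,x-\xi^{-1/2}y\|^2=\xi\|x\|^2-2\langle x,y\rangle+\xi^{-1}\|y\|^2\ge 0$ and rearrange (the quantifier is over the single positive parameter, which should read $\xi>0$ rather than $\varepsilon>0$). The polarization identity in \eqref{eq prosion 2} follows by expanding $\tfrac12\|x-y\|^2=\tfrac12\|x\|^2-\langle x,y\rangle+\tfrac12\|y\|^2$ and solving for $-\langle x,y\rangle$. Finally, \eqref{Jensen ineq} is just Jensen's inequality applied to the convex map $z\mapsto\|z\|^2$ with uniform weights $1/n$.

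Since every item is a textbook identity, there is no genuine obstacle; the only step requiring more than a line is the second inequality, whose one subtlety is recognizing that the shift by $\langle\nabla f(y),\cdot\rangle$ makes $y$ a minimizer, so the descent lemma may be evaluated at $x-\tfrac1L\nabla g(x)$. I would present the four parts in the order listed, reusing \eqref{eq lem 1} inside the proof of the second bound.
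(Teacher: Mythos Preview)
Your argument for each of the four items is correct and is the standard textbook route. Note, however, that the paper does not actually prove this proposition: it simply introduces it with ``Following the standard results in \cite{nesterov2018lectures}, we present the following proposition'' and moves on. So there is nothing in the paper to compare against beyond the citation; you have supplied the elementary proofs that the paper defers to Nesterov. Your side remark that the quantifier should read $\xi>0$ rather than $\varepsilon>0$ is also accurate---that is a typo in the statement.
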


Then, we provide some useful lemmas:
\begin{lemma}
\label{lemma 1}
Suppose Assumption 1 holds. Given $\left\{x_i^t\right\}_{i \in[m]}$, we have:

\begin{align}
& \mathbb{E}\left[\frac{1}{m} \sum_{i=1}^m F_i\left(\hat{x}_m^{k+1}\right)\right] -\frac{1}{m} \sum_{i=1}^m F_i\left(\hat{x}_i^t\right) \\
& \leq-\eta \left\langle\frac{1}{m} \sum_{i=1}^m \nabla_w F_i\left(\hat{x}_i^t\right), \frac{1}{m} \sum_{i=1}^m \nabla_w F_i\left(x_i^t\right)\right\rangle \\
& -\frac{\eta}{m} \sum_{i=1}^m\left(\nabla_{\beta_i} F_i\left(\hat{x}_i^t\right), \nabla_{\beta_i} F_i\left(x_i^t\right)\right\rangle \\
& +\frac{\eta^2 L}{2} \mathbb{E}\left[\left\|h^t\right\|^2\right]+\frac{\eta^2 L}{2 m} \sum_{i=1}^m \mathbb{E}\left[\left\|g_{i, 2}^t\right\|^2\right] \, ,
\end{align}
where the expectation is taken with respect to the randomness in $\xi^t$.
\end{lemma}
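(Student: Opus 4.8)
The plan is to derive this as a standard one-step descent inequality, treating $F_i$ as jointly $L$-smooth in the pair $(w,\beta_i)$ via Assumption~\ref{assumption 1} (with $L$ the common upper bound on $L_{\mathbf{w}}$ and $L_{\beta}$). The crucial thing to keep straight is the notational distinction between the consensus-paired iterate $\hat{x}_i^t=(w^t,\beta_i^t)$, at which the left-hand side and the descent gradients $\nabla F_i(\hat{x}_i^t)$ are evaluated (I read the LHS index as running over $i$), and the local iterate $x_i^t=(w_i^t,\beta_i^t)$, at which the stochastic gradients $g_i^t$, and hence $h^t$ and $g_{i,2}^t$, are computed. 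From the update $(w_i^{t+1},\beta_i^{t+1})=(w_i^t,\beta_i^t)-\eta g_i^t$ together with $w^{t+1}=w^t-\eta h^t$, the increment of the consensus-paired iterate is $\hat{x}_i^{t+1}-\hat{x}_i^t=-\eta(h^t,g_{i,2}^t)$, which separates into a shared $w$-block $-\eta h^t$ and a client-specific $\beta$-block $-\eta g_{i,2}^t$.

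First I would apply the quadratic upper bound from Eq.~\eqref{eq lem 1} to each $F_i$ at the points $\hat{x}_i^{t+1}$ and $\hat{x}_i^t$, giving
\begin{align}
F_i(\hat{x}_i^{t+1}) - F_i(\hat{x}_i^t) \leq \langle \nabla F_i(\hat{x}_i^t), \hat{x}_i^{t+1} - \hat{x}_i^t \rangle + \frac{L}{2}\|\hat{x}_i^{t+1} - \hat{x}_i^t\|^2 \, .
\end{align}
Substituting the increment and splitting both the inner product and the squared norm into their $w$- and $\beta_i$-blocks (using $\|(h^t,g_{i,2}^t)\|^2=\|h^t\|^2+\|g_{i,2}^t\|^2$) turns the right-hand side into $-\eta\langle \nabla_w F_i(\hat{x}_i^t),h^t\rangle-\eta\langle \nabla_{\beta_i}F_i(\hat{x}_i^t),g_{i,2}^t\rangle+\frac{\eta^2 L}{2}\|h^t\|^2+\frac{\eta^2 L}{2}\|g_{i,2}^t\|^2$. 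Averaging over $i\in[m]$ pulls the $i$-independent $h^t$ out of the first inner product against $\frac{1}{m}\sum_i\nabla_w F_i(\hat{x}_i^t)$, collapses $\frac{1}{m}\sum_i\frac{\eta^2 L}{2}\|h^t\|^2$ to $\frac{\eta^2 L}{2}\|h^t\|^2$, and leaves $\frac{\eta^2 L}{2m}\sum_i\|g_{i,2}^t\|^2$, matching the two quadratic terms in the statement.

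Finally I would take the conditional expectation over the minibatch randomness $\xi^t$ given $\{x_i^t\}$. Since $\nabla_w F_i(\hat{x}_i^t)$ and $\nabla_{\beta_i}F_i(\hat{x}_i^t)$ are then deterministic, and the stochastic gradients are unbiased, i.e. $\mathbb{E}[g_{i,1}^t]=\nabla_w F_i(x_i^t)$ and $\mathbb{E}[g_{i,2}^t]=\nabla_{\beta_i}F_i(x_i^t)$ (hence $\mathbb{E}[h^t]=\frac{1}{m}\sum_i\nabla_w F_i(x_i^t)$), the first-order terms become exactly $-\eta\langle \frac{1}{m}\sum_i\nabla_w F_i(\hat{x}_i^t),\frac{1}{m}\sum_i\nabla_w F_i(x_i^t)\rangle$ and $-\frac{\eta}{m}\sum_i\langle \nabla_{\beta_i}F_i(\hat{x}_i^t),\nabla_{\beta_i}F_i(x_i^t)\rangle$, while the $\frac{\eta^2 L}{2}$-scaled quadratic terms retain their $\mathbb{E}[\cdot]$. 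The only real obstacle is this bookkeeping of the two evaluation points: the smoothness expansion is anchored at the consensus-paired iterate $\hat{x}_i^t$, whereas unbiasedness replaces the stochastic directions by gradients at the local iterate $x_i^t$, which is precisely why the final inner products pair a $\hat{x}_i^t$-gradient with an $x_i^t$-gradient instead of two gradients at a single point. One should also confirm that the joint $L$-smoothness invoked in Eq.~\eqref{eq lem 1} is legitimate given that Assumption~\ref{assumption 1} is stated blockwise; this is handled by taking $L$ as the common Lipschitz bound, so no cross-block term beyond the $\frac{\eta^2 L}{2}$ scaling appears.
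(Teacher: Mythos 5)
Your proposal is correct and follows essentially the same route as the paper's proof: a one-step smoothness (descent) inequality for $F_i$ at the consensus-paired iterate $\hat{x}_i^t$ with increment $-\eta(h^t, g_{i,2}^t)$, split into $w$- and $\beta_i$-blocks, averaged over $i\in[m]$, and then closed by taking the conditional expectation over $\xi^t$ and using unbiasedness of $g_{i,1}^t$ and $g_{i,2}^t$. Your explicit bookkeeping of the two evaluation points (smoothness anchored at $\hat{x}_i^t$, unbiasedness yielding gradients at $x_i^t$) and your reading of the paper's $\hat{x}_m^{k+1}$ as $\hat{x}_i^{t+1}$ match the paper's intent, and your treatment of joint smoothness via the common bound $L$ is exactly the (equally informal) step the paper takes.
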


\begin{proof}
 By the $L$-smoothness assumption on $F_i(\cdot)$ and \eqref{eq lem 1}, we have
\begin{align}
F_i\left(\hat{x}_m^{k+1}\right)-&F_i\left(\hat{x}_i^t\right)-\left\langle\nabla F_i\left(\hat{x}_i^t\right), \hat{x}_m^{k+1}-\hat{x}_i^t\right\rangle \notag \\
&\leq \frac{L}{2}\left\|\hat{x}_m^{k+1}-\hat{x}_i^t\right\|^2 \, .
\end{align}

Thus, we have
\begin{align}
&F_i\left(\hat{x}_m^{k+1}\right)-F_i\left(\hat{x}_i^t\right) \leq-\eta\left\langle\nabla_w F_i\left(\hat{x}_i^t\right), h^t\right\rangle \notag \\
&-\eta\left\langle\nabla_{\beta_i} F_i\left(\hat{x}_i^t\right), g_{i, 2}^t\right\rangle+\frac{\eta^2 L}{2}\left\|h^t\right\|^2+\frac{\eta^2 L}{2}\left\|g_{i, 2}^t\right\|^2 \, ,
\end{align}
which implies that
\begin{align}
& \frac{1}{m} \sum_{i=1}^m F_i\left(\hat{x}_m^{k+1}\right)-\frac{1}{m} \sum_{i=1}^m F_i\left(\hat{x}_i^t\right) \leq \notag\\
&-\eta\left\langle\frac{1}{m} \sum_{i=1}^m \nabla_w F_i\left(\hat{x}_i^t\right), h^t\right\rangle-\frac{\eta}{m} \sum_{i=1}^m\left\langle\nabla_{\beta_i} F_i\left(\hat{x}_i^t\right), g_{i, 2}^t\right\rangle \notag \\
&+\frac{\eta^2 L}{2} \|h^t \|^2 +\frac{\eta^2 L}{2 m} \sum_{i=1}^m\left\|g_{i, 2}^t\right\|^2 \, .
\end{align}

The result follows by taking the expectation with respect to the randomness in $\xi^t$, while keeping the other quantities fixed.
\end{proof}

\begin{lemma}
\label{lemma 2}
Suppose Assumptions 2 and 3 hold. Given $\left\{x_i^t\right\}_{i \in[m]}$, we have

\begin{align}
&\mathbb{E}\left[\left\|h^t\right\|^2\right] +\frac{1}{m} \sum_{i=1}^m \mathbb{E}\left[\left\|g_{i, 2}^t\right\|^2\right] \\
& \leq\left(\frac{A_1}{m}+A_2+1\right) \frac{1}{m} \sum_{i=1}^m\left\|F_i\left(x_i^t\right)\right\|^2 +\frac{\sigma_w^2}{m}+\sigma_{\beta}^2 \\
& \leq \lambda\left(\frac{A_1}{m}+A_2+1\right)\left\|\frac{1}{m} \sum_{i=1}^m \nabla F_i\left(x_i^t\right)\right\|^2 \notag \\
& +\left(\frac{A_1}{m}+A_2+1\right) \sigma_{G}^2+\frac{\sigma_w^2}{m}+\sigma_{\beta}^2 \, ,
\end{align}
where the expectation is taken only with respect to the randomness in $\xi^t$.
\end{lemma}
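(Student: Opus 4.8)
The plan is to bound the two terms $\mathbb{E}[\|h^t\|^2]$ and $\frac{1}{m}\sum_{i=1}^m \mathbb{E}[\|g_{i,2}^t\|^2]$ separately via a bias--variance decomposition, combine them using $\|\nabla F_i\|^2 = \|\nabla_w F_i\|^2 + \|\nabla_{\beta_i} F_i\|^2$, and then invoke the bounded-dissimilarity Assumption~\ref{assumption 3} to pass from the first inequality to the second.

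First I would treat $h^t = \frac{1}{m}\sum_{i=1}^m g_{i,1}^t$. Since each $g_{i,1}^t$ is an unbiased estimator of $\nabla_w F_i(x_i^t)$ (by the unbiasedness relation $\mathbb{E}[g_i^t]=\nabla F_i(w_i^t,\beta_i^t)$ recorded earlier) and the per-client stochastic gradients use independent samples $\xi_i^t$, the cross-covariance terms in the variance cancel, so $\mathbb{E}[\|h^t\|^2] = \|\mathbb{E}[h^t]\|^2 + \frac{1}{m^2}\sum_{i=1}^m \mathbb{E}[\|g_{i,1}^t - \nabla_w F_i(x_i^t)\|^2]$. Bounding the squared-mean term by Jensen's inequality~\eqref{Jensen ineq} (giving coefficient $1$) and applying the $w$-part of Assumption~\ref{assumption 2} to each variance term yields $\mathbb{E}[\|h^t\|^2] \leq (1 + \frac{A_1}{m})\frac{1}{m}\sum_{i=1}^m \|\nabla_w F_i(x_i^t)\|^2 + \frac{\sigma_w^2}{m}$. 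The independence across clients is exactly what produces the sharper $\frac{A_1}{m}$ factor (variance reduction by a factor of $m$) rather than a naive $A_1$.

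Next I would handle each $\mathbb{E}[\|g_{i,2}^t\|^2]$ by the same decomposition $\mathbb{E}[\|g_{i,2}^t\|^2] = \|\nabla_{\beta_i} F_i(x_i^t)\|^2 + \mathbb{E}[\|g_{i,2}^t - \nabla_{\beta_i} F_i(x_i^t)\|^2]$, and apply the $\beta$-part of Assumption~\ref{assumption 2} to obtain $\frac{1}{m}\sum_{i=1}^m \mathbb{E}[\|g_{i,2}^t\|^2] \leq (1 + A_2)\frac{1}{m}\sum_{i=1}^m \|\nabla_{\beta_i} F_i(x_i^t)\|^2 + \sigma_\beta^2$. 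Adding the two bounds and coarsely majorizing both coefficients $1 + \frac{A_1}{m}$ and $1 + A_2$ by the common constant $\frac{A_1}{m} + A_2 + 1$ (legitimate since $A_1,A_2\ge 0$), together with $\|\nabla F_i\|^2 = \|\nabla_w F_i\|^2 + \|\nabla_{\beta_i} F_i\|^2$, gives the first inequality. The second inequality is then immediate: I apply Assumption~\ref{assumption 3} to the term $\frac{1}{m}\sum_{i=1}^m \|\nabla F_i(x_i^t)\|^2$ and substitute, distributing the factor $(\frac{A_1}{m}+A_2+1)$ over the two resulting terms.

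I expect the only genuinely delicate step to be the variance computation for $h^t$: one must justify conditional independence of the per-client estimators so that all $i\neq j$ cross terms vanish in expectation, which is precisely what converts a crude $A_1\frac{1}{m}\sum_i\|\cdot\|^2$ into the advertised $\frac{A_1}{m}\cdot\frac{1}{m}\sum_i\|\cdot\|^2$. Everything else -- Jensen, the definitions of the stochastic gradients, and the coefficient comparison -- is routine. One minor bookkeeping point is the minibatch averaging in the definition of $g_{i,\cdot}^t$; I would apply the bounded-variance assumption directly to the estimator actually used in the update, so that the batch size $B$ does not surface explicitly in the final bound.
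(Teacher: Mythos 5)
Your proposal is correct and follows essentially the same route as the paper's proof: a bias--variance decomposition of $h^t$ with cross terms vanishing by unbiasedness and the independence of $\xi_1^t,\dots,\xi_m^t$ (the paper's steps (a)--(b)), Assumption~\ref{assumption 2} on the per-client variance terms, Jensen's inequality~\eqref{Jensen ineq} on the squared mean, the analogous decomposition for $g_{i,2}^t$, and finally Assumption~\ref{assumption 3} for the second inequality; your explicit coefficient majorization by $\frac{A_1}{m}+A_2+1$ just makes precise the combination step the paper leaves implicit. (Note the lemma statement's $\left\|F_i\left(x_i^t\right)\right\|^2$ is a typo for $\left\|\nabla F_i\left(x_i^t\right)\right\|^2$, which is what both your argument and the paper's actually produce.)
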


\begin{proof}
Note that

\begin{align}
\mathbb{E}\left[\left\|h^t\right\|^2\right]& =\mathbb{E}\left[\left\|\frac{1}{m} \sum_{i=1}^m g_{i, 1}^t\right\|^2\right] \\
& \stackrel{(a)}{=} \mathbb{E}\left[\left\|\frac{1}{m} \sum_{i=1}^m\left(g_{i, 1}^t-\nabla_w F_i\left(x_i^t\right)\right)\right\|^2\right] \notag \\
&+\left\|\frac{1}{m} \sum_{i=1}^m \nabla_w F_i\left(x_i^t\right)\right\|^2 \\
& \stackrel{(b)}{=} \frac{1}{m^2} \sum_{i=1}^m \mathbb{E}\left[\left\|g_{i, 1}^t-\nabla_w F_i\left(x_i^t\right)\right\|^2\right] \notag \\
&+\left\|\frac{1}{m} \sum_{i=1}^m \nabla_w F_i\left(x_i^t\right)\right\|^2 \\
& \stackrel{(c)}{\leq} \frac{1}{m^2} \sum_{\mathrm{i}=1}^m\left(A_1\left\|\nabla F_{\mathrm{i}}\left(x_{\mathrm{i}}^{\mathrm{t}}\right)\right\|^2+\sigma_w^2\right) \notag \\
&+\left\|\frac{1}{m} \sum_{\mathrm{i}=1}^m \nabla_{\mathrm{w}} F_{\mathrm{i}}\left(x_{\mathrm{i}}^t\right)\right\|^2 \\
& \stackrel{(d)}{\leq} \frac{A_1}{m^2} \sum_{\mathrm{i}=1}^m\left\|\nabla F_{\mathrm{i}}\left(x_{\mathrm{i}}^t\right)\right\|^2+\frac{\sigma_w^2}{m} \notag \\
&+\frac{1}{m} \sum_{\mathrm{i}=1}^m\left\|\nabla_{\mathrm{w}} f_{\mathrm{i}}\left(x_{\mathrm{i}}^t\right)\right\|^2 \, ,
\end{align}

where (a) is attributed to the unbiased nature of $g_{i, 1}^t$, (b) is a consequence of the independence of $\xi_1^k, \xi_2^k, \ldots, \xi_i^t$, (c) is based on Assumption 2, and (d) follows from \eqref{Jensen ineq}.

We also have
\begin{align}
&\frac{1}{m} \sum_{i=1}^m \mathbb{E}\left[\left\|g_{i, 2}^t\right\|^2\right]  =\frac{1}{m} \sum_{i=1}^m \mathbb{E}\left[\left\|g_{i, 2}^t-\nabla_{\beta_i} F_i\left(x_i^t\right)\right\|^2\right] \notag \\
&+\frac{1}{m} \sum_{i=1}^m\left\|\nabla_{\beta_i} F_i\left(x_i^t\right)\right\|^2 \\
& \leq \frac{A_2}{m} \sum_{i=1}^m \left\| \nabla F_i\left(x_i^t\right)\left\|^2+\sigma_{\beta}^2+\frac{1}{m} \sum_{i=1}^m\right\| \nabla_{\beta_i} F_i\left(x_i^t\right)\right. \|^2 .
\end{align}

\end{proof}

\begin{lemma}
\label{lemma 3}
Under Assumption 1, we have
\begin{align}
&-\eta\left\langle\frac{1}{m} \sum_{i=1}^m \nabla_w F_i\left(\hat{x}_i^t\right), \frac{1}{m} \sum_{i=1}^m \nabla_w F_i\left(x_i^t\right)\right\rangle \notag \\
&-\frac{\eta}{m} \sum_{i=1}^m\left\langle\nabla_{\beta_i} F_i\left(\hat{x}^k\right), \nabla_{\beta_i} F_i\left(x_i^t\right)\right\rangle \\
& \leq-\frac{\eta}{2}\left\|\frac{1}{m} \sum_{i=1}^m \nabla F_i\left(\hat{x}_i^t\right)\right\|^2-\frac{\eta}{2}\left\|\frac{1}{m} \sum_{i=1}^m \nabla F_i\left(x_i^t\right)\right\|^2\notag \\
&+\frac{\eta L^2}{2} V^t
\end{align}

\end{lemma}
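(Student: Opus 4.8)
The plan is to split the two inner products on the left-hand side into squared-norm terms using the polarization identity \eqref{eq prosion 2}, namely $-\langle x,y\rangle = -\frac{1}{2}\|x\|^2 - \frac{1}{2}\|y\|^2 + \frac{1}{2}\|x-y\|^2$, and then to recognize the negative squared-norm contributions as exactly the two target quantities while controlling the leftover ``difference'' terms by $V^t$. This reduces the lemma to two bookkeeping exercises: reassembling the quadratic terms and bounding a gradient-difference residual.

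First I would apply \eqref{eq prosion 2} with $x = \frac{1}{m}\sum_i \nabla_w F_i(\hat{x}_i^t)$ and $y = \frac{1}{m}\sum_i \nabla_w F_i(x_i^t)$ to rewrite the consensus inner product, and apply it separately to each summand $\langle \nabla_{\beta_i} F_i(\hat{x}_i^t), \nabla_{\beta_i} F_i(x_i^t)\rangle$ in the personalized part. Collecting the $-\frac{\eta}{2}\|\cdot\|^2$ contributions and using the product-space convention that the joint gradient norm decomposes as $\|\frac{1}{m}\sum_i \nabla F_i(\cdot)\|^2 = \|\frac{1}{m}\sum_i\nabla_w F_i(\cdot)\|^2 + \frac{1}{m}\sum_i\|\nabla_{\beta_i}F_i(\cdot)\|^2$, the negative terms assemble precisely into $-\frac{\eta}{2}\|\frac{1}{m}\sum_i\nabla F_i(\hat{x}_i^t)\|^2$ and $-\frac{\eta}{2}\|\frac{1}{m}\sum_i\nabla F_i(x_i^t)\|^2$, matching the claimed right-hand side. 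It then remains to bound the residual $\frac{\eta}{2}\|\frac{1}{m}\sum_i(\nabla_w F_i(\hat{x}_i^t) - \nabla_w F_i(x_i^t))\|^2 + \frac{\eta}{2m}\sum_i\|\nabla_{\beta_i}F_i(\hat{x}_i^t) - \nabla_{\beta_i}F_i(x_i^t)\|^2$. I would apply Jensen's inequality \eqref{Jensen ineq} to the consensus residual to move the average inside the norm, after which both residuals are per-client squared gradient differences. The key observation is that $\hat{x}_i^t$ and $x_i^t$ share the same personalized coordinate $\beta_i^t$ and differ only in the consensus coordinate ($w^t$ versus $w_i^t$), so recombining the $w$- and $\beta$-directional differences into the joint gradient and invoking the $L$-smoothness of Assumption~\ref{assumption 1} yields $\|\nabla F_i(\hat{x}_i^t) - \nabla F_i(x_i^t)\|^2 \le L^2\|w^t - w_i^t\|^2$ for each $i$; averaging produces exactly $\frac{\eta L^2}{2}V^t$ since $V^t = \frac{1}{m}\sum_i\|w_i^t - w^t\|^2$.

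The main obstacle I anticipate is the constant in the residual step. It is tempting to bound the $w$-residual and the $\beta$-residual separately, each by $\frac{\eta L^2}{2}V^t$, which would produce two copies and an extra factor of two. The tight constant follows only by merging the two directional differences into a single joint-gradient difference \emph{before} applying smoothness, which is legitimate precisely because the two evaluation points differ only in $w$, so the single bound $L^2\|w^t-w_i^t\|^2$ simultaneously covers both the $w$- and $\beta$-gradient discrepancies. I would therefore state the product-space norm convention explicitly at the outset, since the identification of the negative terms in the second step and the collapse of the residual into a single $V^t$ term both hinge on it.
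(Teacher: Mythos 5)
Your proposal follows essentially the same route as the paper's proof: apply the polarization identity \eqref{eq prosion 2} to both the consensus and personalized inner products, use Jensen's inequality \eqref{Jensen ineq} to pass the average inside the norm of the $w$-difference term, merge the $w$- and $\beta$-gradient discrepancies into a single joint-gradient difference, and invoke smoothness (the paper's step (e)) to obtain $\frac{\eta L^2}{2}V^t$, exploiting that $\hat{x}_i^t$ and $x_i^t$ differ only in the $w$-coordinate. Your explicit remark about merging the two directional residuals before applying smoothness, rather than bounding them separately and losing a factor of two, is precisely what the paper's combined bound $\frac{\eta}{2m}\sum_{i=1}^m\|\nabla F_i(\hat{x}_i^t)-\nabla F_i(x_i^t)\|^2 \leq \frac{\eta L^2}{2}V^t$ does implicitly.
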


\begin{proof}
By \eqref{eq prosion 2}, we have
\begin{align}
& -\eta\left\langle\frac{1}{m} \sum_{i=1}^m \nabla_w F_i\left(\hat{x}_i^t\right), \frac{1}{m} \sum_{i=1}^m \nabla_w F_i\left(x_i^t\right)\right\rangle \\
& =-\frac{\eta}{2} \left\|  \frac{1}{m} \sum_{i=1}^m \nabla_w F_i\left(\hat{x}_i^t\right)\right\|^2-\frac{\eta}{2}\left\| \frac{1}{m} \sum_{i=1}^m \nabla_w F_i\left(x_i^t\right) \right\|^2 
\notag \\
&+\frac{\eta}{2}\left\|\frac{1}{m} \sum_{i=1}^m\left(\nabla_w F_i\left(\hat{x}_i^t\right)-\nabla_w F_i\left(x_i^t\right)\right)\right\|^2 \\
&\leq-\frac{\eta}{2} \|  \frac{1}{m} \sum_{i=1}^m \nabla_w F_i\left(\hat{x}_i^t\right)\left\|^2-\frac{\eta}{2}\right\| \frac{1}{m} \sum_{i=1}^m \nabla_w F_i\left(x_i^t\right) \|^2 \notag \\
& +\frac{\eta}{2 m} \sum_{i=1}^m\left\|\nabla_w F_i\left(\hat{x}_i^t\right)-\nabla_w F_i\left(x_i^t\right)\right\|^2 \, ,
\end{align}
where the last inequality follows from \eqref{Jensen ineq}. 
In addition,

\begin{align}
&-\eta\left\langle\nabla_{\beta_i} F_i\left(\hat{x}^k\right), \nabla_{\beta_i}\right.  \left.F_i\left(x_i^t\right)\right\rangle 
 =  -\frac{\eta}{2}\left\|\nabla_{\beta_i} F_i\left(\hat{x}_i^t\right)\right\|^2\notag \\
&-\frac{\eta}{2}\left\|\nabla_{\beta_i} F_i\left(x_i^t\right)\right\|^2+\frac{\eta }{2}\left\|\nabla_{\beta_i} F_i\left(\hat{x}_i^t\right)-\nabla_{\beta_i} F_i\left(x_i^t\right)\right\|^2 .
\end{align}

Therefore, we have
\begin{align}
&-\frac{\eta}{m}\left\langle\nabla_{\beta_i} F_i\left(\hat{x}^k\right), \nabla_{\beta_i} F_i\left(x_i^t\right)\right\rangle \notag \\
= &-\frac{\eta}{2 m} \sum_{i=1}^m\left\|\nabla_{\beta_i} F_i\left(\hat{x}_i^t\right)\right\|^2-\frac{\eta}{2 m} \sum_{i=1}^m \|\left.\nabla_{\beta_i} F_i\left(x_i^t\right)\right|^2 \notag \\
& +\frac{\eta}{2 m} \sum_{i=1}^m\left\|\nabla_{\beta_i} F_i\left(\hat{x}_i^t\right)-\nabla_{\beta_i} F_i\left(x_i^t\right)\right\|^2 \\
\leq & -\frac{\eta}{2}\left\|\frac{1}{m} \sum_{i=1}^m \nabla_{\beta_i} F_i\left(\hat{x}_i^t\right)\right\|^2-\frac{\eta}{2}\left\|\frac{1}{m} \sum_{i=1}^m \nabla_{\beta_i} F_i\left(x_i^t\right)\right\|^2 \notag \\
& +\frac{\eta}{2 m} \sum_{i=1}^m\left\|\nabla_{\beta_i} F_i\left(\hat{x}_i^t\right)-\nabla_{\beta_i} F_i\left(x_i^t\right)\right\|^2 \, .
\end{align}

Combining the above equations, we have

\begin{align}
&-\eta  \left\langle\frac{1}{m} \sum_{i=1}^m \nabla_w F_i\left(\hat{x}_i^t\right), \frac{1}{m} \sum_{i=1}^m \nabla_w F_i\left(x_i^t\right)\right\rangle \notag \\
&-\frac{\eta}{m} \sum_{i=1}^m\left(\nabla_{\beta_i} F_i\left(\hat{x}_i^t\right), \nabla_{\beta_i} F_i\left(x_i^t\right)\right\rangle \\
\leq & -\frac{\eta}{2}\left\|\frac{1}{m} \sum_{i=1}^m \nabla F_i\left(\hat{x}_i^t\right)\right\|^2-\frac{\eta}{2}\left\|\frac{1}{m} \sum_{i=1}^m \nabla F_i\left(x_i^t\right)\right\|^2 \notag \\
& +\frac{\eta}{2 m} \sum_{i=1}^m\left\|\nabla F_i\left(\hat{x}_i^t\right)-\nabla F_i\left(x_i^t\right)\right\|^2 \\
\stackrel{(e)}{\leq}  & -\frac{\eta}{2}\left\|\frac{1}{m} \sum_{i=1}^m \nabla F_i\left(\hat{x}_i^t\right)\right\|^2-\frac{\eta}{2}\left\|\frac{1}{m} \sum_{i=1}^m \nabla F_i\left(x_i^t\right)\right\|^2\notag \\
&+\frac{\eta L^2}{2 m} \sum_{i=1}^m\left\|w_i^t-w^k\right\|^2 \\
= & -\frac{\eta}{2}\left\|\frac{1}{m} \sum_{i=1}^m \nabla F_i\left(\hat{x}_i^t\right)\right\|^2-\frac{\eta}{2}\left\|\frac{1}{m} \sum_{i=1}^m \nabla F_i\left(x_i^t\right)\right\|^2 \notag \\
&+\frac{\eta L^2}{2} V^t \, ,
\end{align}
where (e) comes from Assumption 1.
\end{proof}



\begin{lemma}
\label{lemma 4}
Suppose Assumptions 2 and 3 hold. For $t_p+1 \leq t \leq \mathrm{v}_{p}$, we have
\begin{align}
&\mathbb{E}\left[V^t\right] \notag \\
\leq &\lambda(K-1)\left(A_1+1\right) \sum_{\tau=t_p}^{t-1} \eta_{\tau}^2 \mathbb{E}\left[\left\|\frac{1}{m} \sum_{i=1}^m \nabla F_i\left(x_i^{\tau}\right)\right\|^2\right] \\
+&\sigma_{G}^2(K-1)\left(A_1+1\right) \sum_{\tau=t_p}^{t-1} \eta_{\tau}^2+\frac{\sigma_w^2(K-1)}{B} \sum_{\tau=t_p}^{t-1} \eta_{\tau}^2 .
\end{align}
Note that $V^{t_p}=0$.
\end{lemma}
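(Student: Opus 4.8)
The plan is to exploit the fact that at the most recent synchronization time $t_p$ all local models coincide with their average, so that $V^{t_p}=0$ and $w_i^{t_p}=w^{t_p}$ for every $i$. Starting from this common point, I would unroll the local recursions $w_i^{\tau+1}=w_i^\tau-\eta_\tau g_{i,1}^\tau$ and $w^{\tau+1}=w^\tau-\eta_\tau h^\tau$ over $\tau=t_p,\dots,t-1$, obtaining the closed form
\begin{align}
w_i^t-w^t=-\sum_{\tau=t_p}^{t-1}\eta_\tau\left(g_{i,1}^\tau-h^\tau\right).
\end{align}
Since the averaging period has length $K$ and we assume $t_p+1\le t\le v_p=t_p+K-1$, the number of summands $t-t_p$ is at most $K-1$; this is exactly where the factor $(K-1)$ in the bound originates.

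Next I would control the squared norm of this sum. Applying the Jensen/power-mean inequality \eqref{Jensen ineq} to the (at most $K-1$) terms gives
\begin{align}
\left\|w_i^t-w^t\right\|^2\le (K-1)\sum_{\tau=t_p}^{t-1}\eta_\tau^2\left\|g_{i,1}^\tau-h^\tau\right\|^2 ,
\end{align}
and averaging over clients I would use that subtracting the mean $h^\tau=\frac{1}{m}\sum_j g_{j,1}^\tau$ can only decrease the second moment, i.e. $\frac{1}{m}\sum_i\|g_{i,1}^\tau-h^\tau\|^2\le\frac{1}{m}\sum_i\|g_{i,1}^\tau\|^2$. This reduces the problem to controlling the average squared stochastic gradient at each step.

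I would then take expectation over the batch randomness and split each $\mathbb{E}[\|g_{i,1}^\tau\|^2]$ into its mean-square gradient plus variance via Assumption~\ref{assumption 2}, which (after the crude bound $A_1/B\le A_1$ absorbing the batch averaging) yields a coefficient $(A_1+1)$ on $\|\nabla_w f_i(x_i^\tau)\|^2$ together with the noise floor $\sigma_w^2/B$. Bounding the $w$-gradient norm by the full gradient norm $\|\nabla_w f_i\|\le\|\nabla F_i\|$ and invoking the bounded-dissimilarity Assumption~\ref{assumption 3}, I replace $\frac{1}{m}\sum_i\|\nabla F_i(x_i^\tau)\|^2$ by $\lambda\|\frac{1}{m}\sum_i\nabla F_i(x_i^\tau)\|^2+\sigma_G^2$. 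Collecting the three resulting contributions reproduces precisely the gradient term $\lambda(K-1)(A_1+1)$, the $\sigma_G^2(K-1)(A_1+1)$ term, and the $\sigma_w^2(K-1)/B$ term in the statement.

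The main technical care is in the expectation step: since each iterate $x_i^\tau$ itself depends on all batch draws through time $\tau-1$, the decomposition of $\mathbb{E}[\|g_{i,1}^\tau\|^2]$ must be carried out by conditioning on the filtration $\mathcal{F}_{\tau-1}$ generated by the randomness up to step $\tau-1$ and then using the unbiasedness $\mathbb{E}[g_{i,1}^\tau\mid\mathcal{F}_{\tau-1}]=\nabla_w f_i(x_i^\tau)$ together with Assumption~\ref{assumption 2}; the tower property and the summation over $\tau$ then assemble the final bound. The remaining manipulations are routine algebra, so I expect this conditioning bookkeeping — rather than any single inequality — to be the delicate part of the argument.
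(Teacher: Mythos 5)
Your proposal is correct and follows essentially the same route as the paper's proof: unroll the local updates from the synchronization point $w_i^{t_p}=w^{t_p}$, remove the mean $h^\tau$ via the exact variance identity (the cross term vanishes when averaging over $i$), apply Jensen over the at most $K-1$ local steps, and then invoke Assumptions 2 and 3 with conditioning on the past and the tower property — you merely swap the order of the Jensen step and the mean-removal step, which is immaterial to the constants. Incidentally, your explicit retention of the mini-batch factor $\sigma_w^2/B$ (and the crude bound $A_1/B\leq A_1$) is actually more consistent with the lemma statement than the paper's own proof, which silently drops the $1/B$ in its final display.
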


\begin{proof}
Note that $w^{t_p}=w_i^{t_p}$ for all $i \in[m]$. Thus, for $t_p+1 \leq t \leq v_{p}$, we have
\begin{align}
&\left\|w_i^t-w^k\right\|^2=\left\|w_i^t-\sum_{\tau=t_p}^{t-1} \eta_\tau g_{i, 1}^\tau-w^{t_p}-\sum_{\tau=t_p}^{t-1} \eta_\tau h^{\tau}\right\|^2 \notag \\
&=\left\|\sum_{\tau=t_p}^{t-1} \eta_{\tau} g_{i, 1}^{\tau}-\sum_{\tau=t_p}^{t-1} \eta_{\tau} h^{\tau}\right\|^2 .
\end{align}

Since
\begin{align}
\frac{1}{m} \sum_{i=1}^m \sum_{\tau=t_p}^{t-1} \eta_{\tau} g_{i, 1}^{\tau}=\sum_{\tau=t_p}^{t-1} \eta_{\tau} h^{\tau}.
\end{align}

Then we have

\begin{align}
& \frac{1}{m} \sum_{i=1}^m\left\|w_i^t-w^t\right\|^2  \\
=&\frac{1}{m} \sum_{i=1}^m\left\|\sum_{\tau=t_p}^{t-1} \eta_{\tau} g_{i, 1}^\tau-\sum_{\tau=t_p}^{t-1} \eta_{\tau} h^{\tau}\right\|^2  \\
=&\frac{1}{m} \sum_{i=1}^m\left\|\sum_{\tau=t_p}^{t-1} \eta_{\tau} g_{i, 1}^{\tau}\right\|^2-\left\|\sum_{\tau=t_p}^{t-1} \eta_{\tau} h^{\tau}\right\|^2  \\
\leq & \frac{1}{m} \sum_{i=1}^m\left\|\sum_{\tau=t_p}^{t-1} \eta_{\tau} g_{i, 1}^{\tau} \right\|^2  \leq \frac{t-t_p}{m} \sum_{i=1}^m \sum_{\tau=t_p}^{t-1} \eta_{\tau}^2\left\|g_{i, 1}^{\tau}\right\|^2 \\
\leq &\frac{K-1}{m} \sum_{i=1}^m \sum_{\tau=t_p}^{t-1}\eta_{\tau}^2\left\|g_{i, 1}^{\tau}\right\|^2 \, .
\label{equation 3 in app}
\end{align}

Given $\left\{x_i^t\right\}_{i \in[m] }$, we have

\begin{align}
&\mathbb{E}\left[\frac{1}{m} \sum_{i=1}^m\left\|g_{i, 1}^t\right\|^2\right]  =\frac{1}{m} \sum_{i=1}^m \mathbb{E}\left[\left\|g_{i, 1}^t\right\|^2\right] \\
& =\frac{1}{m} \sum_{i=1}^m \mathbb{E}\left[\left\|g_{i, 1}^t-\nabla_w F_i\left(x_i^t\right)\right\|^2\right] \notag \\
&+\frac{1}{m} \sum_{i=1}^m  \| \nabla_w F_i\left(x_i^t\right) \|^2 \\
& \leq \frac{1}{m} \sum_{i=1}^m\left[\left(A_1+1\right) \left\|\nabla F_i\left(x_i^t\right)\right\|^2+\sigma_w^2\right] \notag \\
&+\frac{1}{m} \sum_{i=1}^m\left\|\nabla F_i\left(x_i^t\right)\right\|^2 \\
& =\frac{A_1+1}{m} \sum_{i=1}^m\left\|\nabla F_i\left(x_i^t\right)\right\|^2+\sigma_w^2 .
\end{align}

where the expectation is taken with respect to the randomness in $\xi^t$. Thus, by the independence of $\xi^{(1)}, \xi^{(2)}, \ldots, \xi^t$ and taking an unconditional expectation on both sides of \eqref{equation 3 in app}, we have

\begin{align}
&\mathbb{E}\left[V^t\right]  =(K-1) \sum_{\tau=t_p}^{t-1} \eta_{\tau}^2 \mathbb{E}\left[\mathbb{E}\left[\frac{1}{m} \sum_{i=1}^m\left\|g_{i, 1}^{\tau}\right\|^2\right]\right] \\
& \leq(K-1)\left(A_1+1\right) \sum_{\tau=t_p}^{t-1} \eta_{\tau}^2 \mathbb{E}\left[\frac{1}{m} \sum_{i=1}^m\left\|\nabla F_i\left(x_i^{\tau}\right)\right\|^2\right] \notag \\
&+{(K-1) \sigma_w^2} \sum_{\tau=t_p}^{t-1} \eta_{\tau}^2 \\
& \leq \lambda(K-1)\left(A_1+1\right) \sum_{\tau=t_p}^{t-1} \eta_{\tau}^2 \mathbb{E}\left[\left\|\frac{1}{m} \sum_{i=1}^m \nabla F_i\left(x_i^{\tau}\right)\right\|^2\right] 
\notag \\
 & +\sigma_{G}^2(K-1)\left(A_1+1\right) \sum_{\tau=t_p}^{t-1} \eta_{\tau}^2 +{(K-1) \sigma_w^2} \sum_{\tau = t_p}^{t-1} \eta_{\tau}^2 \, ,
\end{align}
where the last inequality follows Assumption 3.
\end{proof}

\subsection{Proof of Theorem 3.5}

Under Assumptions 1-3, given $\left\{x_i^t\right\}_{i \in[m]}$, it follows from Lemmas ~\ref{lemma 1}-\ref{lemma 4} that

\begin{align}
&\mathbb{E}\left[\frac{1}{m} \sum_{i=1}^m F_i\left(\hat{x}_i^{t+1}\right)\right]  -\frac{1}{m} \sum_{i=1}^m F_i\left(\hat{x}_i^t\right) 
\leq \notag \\
&-\frac{\eta}{2}\left\|\frac{1}{m} \sum_{i=1}^m F_i\left(\hat{x}_i^t\right)\right\|^2-\frac{\eta}{2}\left\|\frac{1}{m} \sum_{i=1}^m F_i\left(x_i^t\right)\right\|^2+\frac{\eta L^2}{2} V^t \notag \\
 & +\frac{1}{2} \eta^2 L \lambda\left(\frac{A_1}{m}+A_2+1\right)\left\|\frac{1}{m} \sum_{i=1}^m F_i\left(x_i^t\right)\right\|^2 \notag \\
 & +\frac{1}{2} \eta^2 L \lambda\left\{\left(\frac{A_1}{m}+A_2+1\right) \sigma_{G}^2+\frac{\sigma_w^2}{m }+\sigma_{\beta}^2\right\} \, ,
\end{align}
where the expectation is taken with respect to the randomness in $\xi^t$. Thus, taking the unconditional expectation on both sides of the equation above, we have
\begin{align}
& \mathbb{E}\left[\frac{1}{m} \sum_{i=1}^m F_i\left(\hat{x}_i^{t+1}\right)-\frac{1}{m} \sum_{i=1}^m F_i\left(\hat{x}_i^t\right)\right] \notag \\
& \leq-\frac{\eta}{2} \mathbb{E}\left[\left\|\frac{1}{m} \sum_{i=1}^m F_i\left(\hat{x}_i^t\right)\right\|^2\right]-\frac{\eta}{2} \mathbb{E}\left[\left\|\frac{1}{m} \sum_{i=1}^m F_i\left(x_i^t\right)\right\|^2\right] \notag \\
& +\frac{\eta L^2}{2} \mathbb{E}\left[V^t\right] \\
&+ \frac{1}{2} \eta^2 L \lambda\left(\frac{A_1}{m}+A_2+1\right) \mathbb{E}\left[\left\|\frac{1}{m} \sum_{i=1}^m F_i\left(x_i^t\right)\right\|^2\right] \notag \\
&+ \frac{1}{2} \eta^2 L \lambda\left\{\left(\frac{A_1}{m}+A_2+1\right) \sigma_{G}^2+\frac{\sigma_w^2}{m }+\sigma_{\beta}^2\right\} \, ,
\end{align}

which implies that

\begin{align}
\label{Eq 38}
& \mathbb{E}\left[\frac{1}{m} \sum_{i=1}^m F_i\left(\hat{x}_i^{t_{p+1}}\right)-\frac{1}{m} \sum_{i=1}^m F_i\left(\hat{x}_i^{t_p}\right)\right] \notag \\
 =& \sum_{\tau=t_p}^{v_p} \mathbb{E}\left[\frac{1}{m} \sum_{i=1}^m F_i\left(\hat{x}_m^{\tau+1}\right)-\frac{1}{m} \sum_{i=1}^m F_i\left(\hat{x}_i^\tau \right)\right] \\
 \leq&-\frac{\eta}{2} \sum_{\tau=t_p}^{v_p} \mathbb{E}\left[\left\|\frac{1}{m} \sum_{i=1}^m F_i\left(\hat{x}_i^\tau\right)\right\|^2\right] \notag \\
+&\frac{\eta}{2}\left\{-1+\eta L \lambda\left(\frac{A_1}{m}+A_2+1\right)\right\}  \notag \\
&\sum_{\tau=t_p}^{v_p}\mathbb{E}\left[\left\|\frac{1}{m} \sum_{i=1}^m F_i\left(x_i^{\tau}\right)\right\|^2\right] \\
+&\frac{\eta L^2}{2} \sum_{\tau=t_p}^{v_p}\mathbb{E}\left[V^{\tau}\right] \notag \\
+&\sum_{\tau=t_p}^{v_p} \frac{1}{2} \eta^2 L \lambda\left\{\left(\frac{A_1}{m}+A_2+1\right) \sigma_{G}^2+\frac{\sigma_w^2}{m}+\sigma_{\beta}^2\right\} \, .
\end{align}

By Lemma~\ref{lemma 4}, for all $t_p \leq \tau \leq v_p$, we have that
\begin{align}
&\mathbb{E}\left[V^{\tau}\right] \\
 \leq & \lambda \eta^2(K-1)\left(A_1+1\right) \sum_{\tau=t_p}^{t-1} \mathbb{E}\left[\left\|\frac{1}{m} \sum_{i=1}^m \nabla F_i\left(x_i^{\tau}\right)\right\|^2\right] \notag \\
 +&\eta^2 \sigma_{G}^2(K-1)\left(A_1+1\right)\left(\tau-t_p\right)+{\eta^2 \sigma_w^2(K-1)}\left(\tau-t_p\right) \\
 \leq & \lambda \eta^2(K-1)\left(A_1+1\right) \sum_{\tau=t_p}^{v_p} \mathbb{E}\left[\left\|\frac{1}{m} \sum_{i=1}^m \nabla F_i\left(x_i^{\tau}\right)\right\|^2\right] \notag \\
+&\eta^2 \sigma_{G}^2(K-1)^2\left(A_1+1\right)+{\eta^2 \sigma_w^2(K-1)^2} .
\end{align}

Therefore, we have

\begin{align}
& \frac{\eta L^2}{2} \sum_{\tau=t_p}^{v_p} \mathbb{E}\left[V^{\tau}\right] \notag \\
\leq &\frac{1}{2} \lambda \eta^3 L^2(K-1) K\left(A_1+1\right) \sum_{\tau=t_p}^{v_p} \mathbb{E}\left[\left\|\frac{1}{m} \sum_{i=1}^m \nabla F_i\left(x_i^{\tau}\right)\right\|^2\right] \notag \\
+&\sum_{\tau=t_p}^{v_p}  \frac{1}{2} \eta^3 L^2 \sigma_{G}^2(K-1)^2\left(A_1+1\right) +\sum_{\tau=t_p}^{v_p} \frac{\eta^3 L^2 \sigma_w^2(K-1)^2}{2 }  .
\end{align}

Combined with \eqref{Eq 38}, we have

\begin{align}
&\mathbb{E} {\left[\frac{1}{m} \sum_{i=1}^m \nabla F_i\left(\hat{x}_i^{t_{p+1}}\right)-\frac{1}{m} \sum_{i=1}^m \nabla F_i\left(\hat{x}_i^{t_p}\right)\right] } 
\notag \\
\leq & -\frac{\eta}{2} \sum_{\tau=t_p}^{v_p} \mathbb{E}\left[\left\|\frac{1}{m} \sum_{i=1}^m \nabla F_i\left(\hat{x}_i^{\tau}\right)\right\|^2\right] \notag \\
 +& \frac{\eta}{2}\left\{-1+\eta L \lambda\left(\frac{A_1}{m}+A_2+1\right)+ \right. \notag \\
 &\left. \lambda \eta^2 L^2(K-1) K\left(A_1+1\right)\right\}\sum_{\tau=t_p}^{v_p} \mathbb{E}\left[\left\|\frac{1}{m} \sum_{i=1}^m \nabla F_i\left(x_i^{\tau}\right)\right\|^2\right] \notag \\
  +&\sum_{\tau=t_p}^{v_p} \frac{1}{2} \eta^2 L \lambda\left\{\left(\frac{A_1}{m}+A_2+1\right) \sigma_{G}^2+\frac{\sigma_w^2}{m}+\sigma_{\beta}^2\right\} \notag \\
&+\sum_{\tau=t_p}^{v_p}  \frac{1}{2} \eta^3 L^2 \sigma_{G}^2(K-1)^2\left(A_1+1\right) \notag \\
+&\sum_{\tau=t_p}^{v_p} \frac{\eta^3 L^2 \sigma_w^2(K-1)^2}{2} \, .
\end{align}

Due to
\begin{align}
-1+\eta L \lambda\left(\frac{A_1}{m}+A_2+1\right)+&\lambda \eta^2 L^2(K-1) K\left(A_1+1\right) \notag \\
\leq &0 \, ,
\end{align}
which implies that

\begin{align}
& \mathbb{E}\left[\frac{1}{m} \sum_{i=1}^m \nabla F_i\left(x_i^{t_{p+1}}\right)-\frac{1}{m} \sum_{i=1}^m \nabla F_i\left(x_i^{t_p}\right)\right] \notag \\
 \leq&-\frac{\eta}{2} \sum_{\tau=t_p}^{v_p} \mathbb{E}\left[\left\|\frac{1}{m} \sum_{i=1}^m \nabla F_i\left(\hat{x}_i^{\tau}\right)\right\|^2\right] \notag \\
+&\sum_{\tau=t_p}^{v_p} \frac{1}{2} \eta^2 L \lambda\left\{\left(\frac{A_1}{m}+A_2+1\right) \sigma_{G}^2+\frac{\sigma_w^2}{m}+\sigma_{\beta}^2\right\} \notag \\
+&\sum_{\tau=t_p}^{v_p} \frac{1}{2} \eta^3 L^2 \sigma_{G}^2(K-1)^2\left(A_1+1\right) \notag \\
+&\sum_{\tau=t_p}^{v_p} \frac{\eta^3 L^2 \sigma_w^2(K-1)^2}{2} .
\end{align}

Since we have assumed that $T=t_{\hat{p}}$ for some $\bar{p} \in \mathbb{N}^{+}$, we further have

\begin{align}
 & \frac{1}{T} \mathbb{E}\left[\left(\frac{1}{m} \sum_{i=1}^m \nabla F_i\left(\hat{x}_i^T\right)-f^*\right) \right.\notag \\
 &\left. -\left(\frac{1}{m} \sum_{i=1}^m \nabla F_i\left(\hat{x}_i^0\right)-f^*\right)\right]  \\
 =& \frac{1}{T} \mathbb{E}\left[\frac{1}{m} \sum_{i=1}^m \nabla F_i\left(\hat{x}_i^t\right)-\frac{1}{m} \sum_{i=1}^m \nabla F_i\left(\hat{x}_m^0\right)\right] \\
= & \frac{1}{T} \sum_{p=0}^{\bar{p}-1} \mathbb{E}\left[\frac{1}{m} \sum_{i=1}^m \nabla F_i\left(\hat{x}_m^{t_{p+1}}\right)-\frac{1}{m} \sum_{i=1}^m \nabla F_i\left(\hat{x}_m^{t_p}\right)\right] \\
\leq & -\frac{\eta}{2 K} \sum_{p=0}^{\bar{p}-1} \sum_{\tau=t_p}^{v_p} \mathbb{E}\left[\left\|\frac{1}{m} \sum_{i=1}^m \nabla F_i\left(\hat{x}_i^t\right)\right\|^2\right]  \notag \\
+ &
 \frac{1}{T} \sum_{p=0}^{\bar{p}-1} \sum_{\tau=t_p}^{v_p} 
\frac{1}{2} \eta^2 L \lambda\left\{\left(\frac{A_1}{m}+A_2+1\right) \sigma_{G}^2+\frac{\sigma_w^2}{m}+\sigma_{\beta}^2\right\} \\
 +&\frac{1}{T} \sum_{p=0}^{\bar{p}-1} \sum_{\tau=t_p}^{v_p} \frac{1}{2} \eta^3 L^2 \sigma_{G}^2(K-1)^2\left(A_1+1\right) \notag \\
+&\frac{1}{T} \sum_{p=0}^{\bar{p}-1} \sum_{\tau=t_p}^{v_p} \frac{\eta^3 L^2 \sigma_w^2(\tau-1)^2}{2 }  \\
= & -\frac{\eta}{2 T} \sum_{t=0}^{T-1} \mathbb{E}\left[\left\|\frac{1}{m} \sum_{i=1}^m \nabla F_i\left(\hat{x}_i^t\right)\right\|^2\right] \notag \\
+&\frac{1}{2} \eta^2 L \lambda\left\{\left(\frac{A_1}{m}+A_2+1\right) \sigma_{G}^2+\frac{\sigma_w^2}{m}+\sigma_{\beta}^2\right\} \\
 +&\frac{1}{2} \eta^3 L^2 \sigma_{G}^2(K-1)^2\left(A_1+1\right)+\frac{\eta^3 L^2 \sigma_w^2(K-1)^2}{2 } .
\end{align}

We can conclude that

\begin{align}
 &\frac{1}{T} \sum_{t=0}^{T-1} \mathbb{E}\left[\left\|\frac{1}{m} \sum_{i=1}^m \nabla F_i\left(\hat{x}_i^t\right)\right\|^2\right]  \notag \\
 \leq & \frac{2 \mathbb{E}\left[\frac{1}{m} \sum_{i=1}^m \nabla F_i\left(\hat{x}_i^0\right)-f^*\right]}{\eta T}
 \notag \\
 +&
 \eta L \lambda\left\{\left(\frac{A_1}{m}+A_2+1\right) \sigma_{G}^2 +\frac{\sigma_w^2}{m}+\sigma_{\beta}^2\right\} \notag \\
+&\eta^2 L^2 \sigma_{G}^2(K-1)^2\left(A_1+1\right)+{\eta^2 L^2 \sigma_w^2(K-1)^2}\, .
\end{align}

\section{Additional Experiment Results and Experiment Details}
\label{Sec app exp}

\subsection{Toy examples}
\label{sec app toy example}
In this subsection, we present the comprehensive comparison results illustrated in Figure 2 of the main paper, showcasing various levels of client cooperation, as depicted in Figure~\ref{complete compare}. The overall test accuracy performance is further demonstrated in Figure~\ref{test ACC on fashion}.

\begin{figure*}[t]
    \centering
    \includegraphics[width = 1.\textwidth]{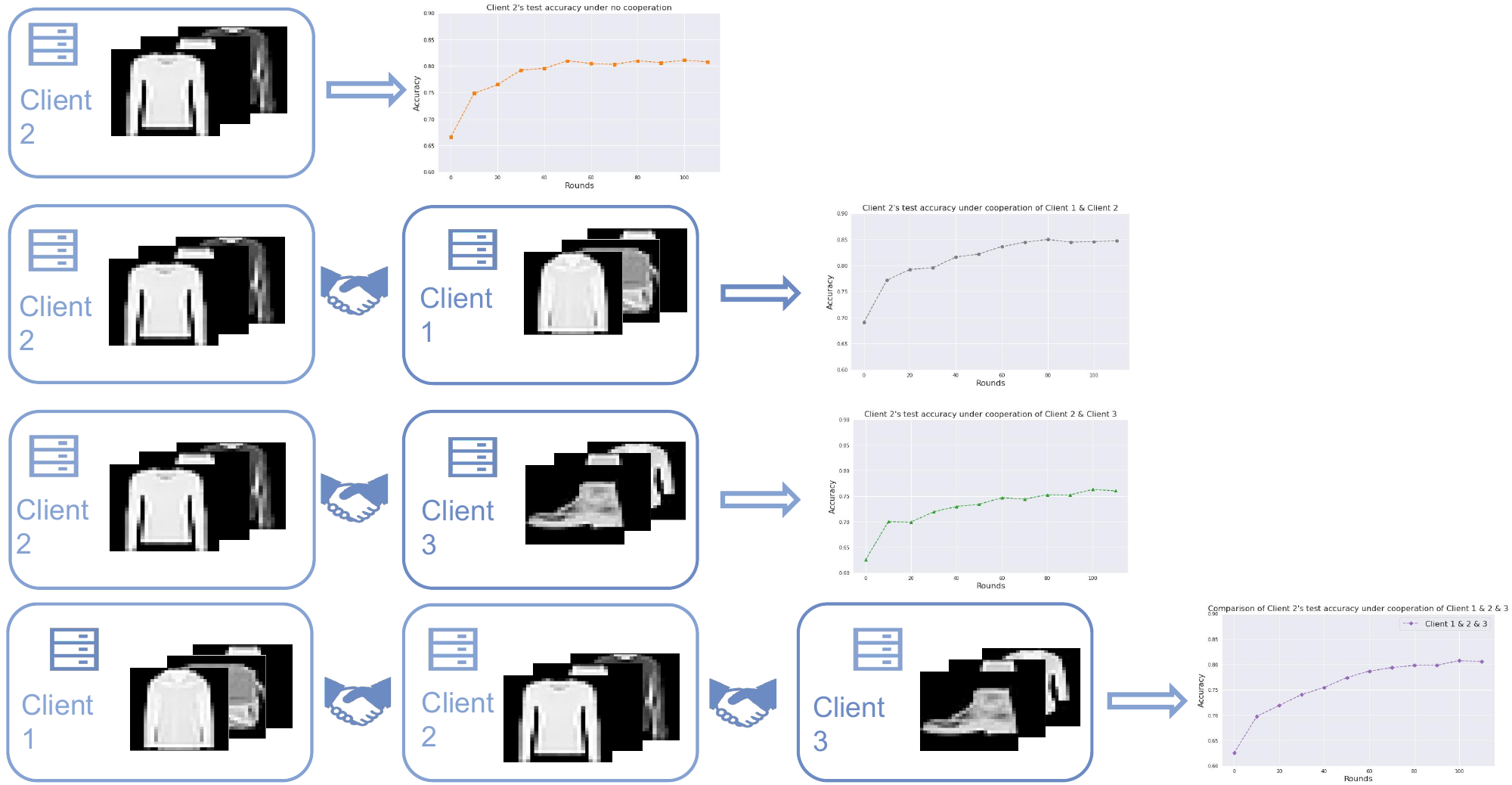}
    \caption{Illustration of different cooperation methods.}
    \label{complete compare}
\end{figure*}

\begin{figure}[t]
    \centering
    \includegraphics[width = .5 \textwidth]{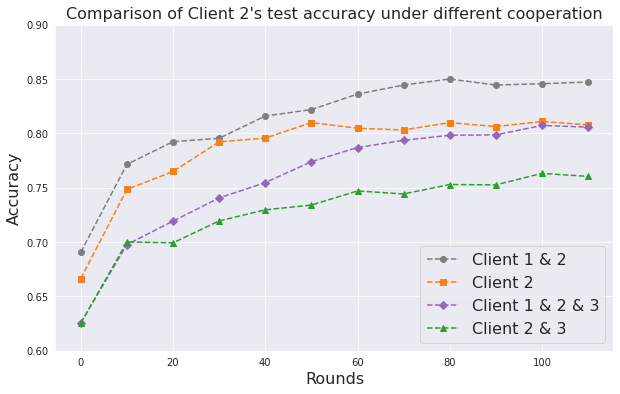}
    \caption{The test accuracy of different cooperation of clients on FashionMNIST.}
    \label{test ACC on fashion}
\end{figure}

\subsection{Experimental Environment}
\label{sec exp env}
For our experiments, we use NVIDIA GeForce RTX 3090 GPUs. Each simulation trail with 500 communication rounds and three random seeds.

\subsection{Experiment setup}
\label{app experiment setup}

\paragraph{Training Settings. } 
We maintain the same experimental settings for all baselines, conducting 500 communication rounds with 100 clients. The client sampling ratio is set to 0.1 for the baselines, while in DFL, when implementing sampling methods like gossip, each client communicates with 10 neighbors. The batch size is 64 for FEMNIST and 128 for CIFAR. The local epochs per round are set to 5 for all baselines, while the local fine-tune epoch is set to 1 by default. We employ SGD with momentum as the base optimizer with a learning rate of $\eta=0.01$ and a local momentum of $0.9$.

\paragraph{Setup for FashionMNIST, CIFAR-10 and CIFAR-100.}
To evaluate the performance of our algorithm AFIND+, we train a two-layer CNN on the non-iid FashionMNIST and four-layer CNN on CIFAR-10 datasets, and a ResNet-18 on the non-iid CIFAR-100 dataset, respectively.

Unless specifically mentioned otherwise, our studies use the following protocol: the default non-IID is from Dirichlet with a parameter of $\alpha = 0.5$, the server chooses $10\%$ clients according to sampling strategy from the total of $m=100$ clients, and each is trained for $T=500$ communication rounds with $K=5$ local epochs. 

All sampling algorithms use FedAvg-FT as the backbone. We compare our AFIND+ with Gossip sampling, PENS, and FedeRiCo on different datasets and different settings.

\paragraph{Setup for LEAF.} To test our algorithm's efficiency on diverse real datasets, we use the non-IID FEMNIST dataset in LEAF, as given in \citep{caldas2018leaf}. All baselines use a 4-layer CNN with a learning rate of $0.1$, batch size of 32, sample ratio of $10\%$ and communication round of $T=500$. The reported results are averaged over three runs with different random seeds.

\subsection{Additional Experimental Results}
\label{app additional experiments}

\paragraph{Ablation study for $\tau$ of AFINE+ on CIFAR-10.}
Figure~\ref{ablation tau} shows the convergence performance of AFIND+ on the CIFAR-10 dataset with $\alpha=0.1$, under different global thresholds, specifically $\tau = 0.1, 0.5, 0.8, 1.0, 1.5$. Different thresholds lead to different client participation behaviors for DFL, demonstrating the stability of our algorithm to the hyperparameter $\tau$. Additionally, a smaller $\tau=0.2$ tends to result in better performance, but the requirement of training rounds for convergence is larger. In contrast, a larger $\tau$ leads to faster convergence, but the accuracy is slightly poorer since the participation number of clients is small when the threshold is large.

\begin{figure}[t]
    \centering
    \includegraphics[width = .7 \textwidth]{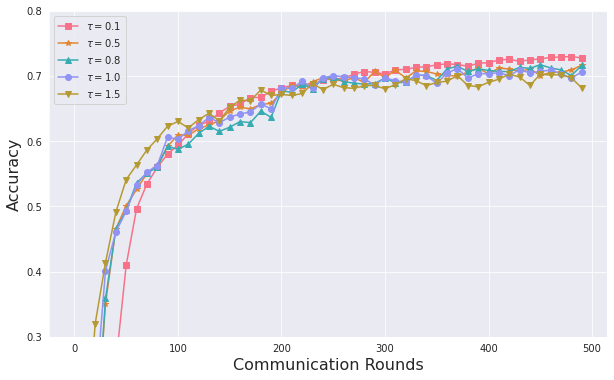}
    \caption{Ablation study for $\tau$ on CIFAR-10.}
    \label{ablation tau}
    \vspace{-1 em}
\end{figure}

\paragraph{Ablation study for network topologies.} 
Table~\ref{table of ablation of network typology} shows the performance of different sampling methods when applied to a partially connected network topology, in which each client randomly connects to half of the other clients. As the network typology becomes more challenging (from full connection to partial connection), the advantage of AFIND+ becomes more significant than other baselines because AFINE+ can adaptively adjust each client's sampling numbers, enabling flexible neighbor connections.

\begin{table}[!t]
 \small
 \centering
 \caption{Performance of sampling algorithms on the \textbf{partially connected network}. In the partially connected network, each client is randomly connected to others. In the full connection network typology (Table 1), each client is connected to all others.} 
 \fontsize{7.6}{11}\selectfont 
 \resizebox{.7\textwidth}{!}{%
  \begin{tabular}{l l l l l l l l l c c}
   \toprule
   \multirow{2}{*}{Algorithm} & \multicolumn{2}{c}{CIFAR-10} & \multicolumn{2}{c}{CIFAR-100}\\
   \cmidrule(lr){2-3} \cmidrule(lr){4-5} 
                    & $\alpha=0.1$ & $c=5$     &  $\alpha=0.1$ & $c=10$\\
   \midrule
Gossip  &51.35{\transparent{0.5}±1.76} & 68.12{\transparent{0.5}±2.47} & 49.61{\transparent{0.5}±1.74}  &58.64{\transparent{0.5}±1.38} \\ 
PENS &53.78{\transparent{0.5}±1.32} & 70.46{\transparent{0.5}±1.57} & 51.24{\transparent{0.5}±1.53}  &61.01{\transparent{0.5}±1.94} \\ 
FedeRiCo &55.85{\transparent{0.5}±1.15} & 71.58{\transparent{0.5}±1.93} & 53.21{\transparent{0.5}±1.64}  &63.41{\transparent{0.5}±1.57} \\ 
AFIND+ &65.24{\transparent{0.5}±2.12} & 76.21{\transparent{0.5}±2.12} & 58.87{\transparent{0.5}±2.18}  &66.85{\transparent{0.5}±2.15} \\ 
\bottomrule
\end{tabular}
}
\centering
\label{table of ablation of network typology}
\end{table}

\begin{table}[!t]
 \small
 \centering
 \caption{Performance of AFIND+ under \textbf{differential privacy noise}. Insert Gaussian
noise into the intermediate regularization variable $\delta$ with noise standard deviation $\sigma_2$. } 
 \fontsize{7.6}{11}\selectfont 
 \resizebox{.7\textwidth}{!}{%
  \begin{tabular}{c l l l l l l l l c c}
   \toprule
   \multirow{2}{*}{noise $\sigma_2$} & \multicolumn{2}{c}{CIFAR-10} & \multicolumn{2}{c}{CIFAR-100}\\
   \cmidrule(lr){2-3} \cmidrule(lr){4-5} 
                    & $\alpha=0.1$ & $c=5$     &  $\alpha=0.1$ & $c=10$\\
   \midrule
0 &71.89{\transparent{0.5}±0.06} & 80.36{\transparent{0.5}±0.32} & 62.11{\transparent{0.5}±1.82}  &71.07{\transparent{0.5}±1.02} \\ 
2 &70.61{\transparent{0.5}±0.14} & 79.67{\transparent{0.5}±1.34} & 61.07{\transparent{0.5}±0.40}  &70.82{\transparent{0.5}±1.51} \\ 
5 &69.94{\transparent{0.5}±1.42} & 78.85{\transparent{0.5}±0.62} & 60.72{\transparent{0.5}±0.16}  &68.23{\transparent{0.5}±1.23} \\ 
10 &67.07{\transparent{0.5}±0.42} & 76.11{\transparent{0.5}±1.74} & 58.41{\transparent{0.5}±1.52}  &66.05{\transparent{0.5}±0.87} \\ 
50 &64.24{\transparent{0.5}±1.56} & 73.92{\transparent{0.5}±1.08} & 55.86{\transparent{0.5}±1.12}  &63.20{\transparent{0.5}±1.18} \\ 
\bottomrule
\end{tabular}
}
\centering
\label{table of ablation of dp}
\end{table}

\paragraph{Privacy evaluation.}
We also evaluate AFIND+ under privacy preservation. Following~\cite{abadi2016deep},  we insert Gaussian
noise into the intermediate regularization variable $\delta$ with noise standard deviation $\sigma_2: \tilde{\sigma}_i \leftarrow \sigma_i + \frac{1}{L}\mathcal{N}(0, \sigma_2^2C_0^2 I)$, where $L$ is the batch size, $\sigma_2$ is the noise parameter, $C_2$ is the clipping constant. The result is shown in Table~\ref{table of ablation of dp}. With $\sigma_2 \leq 5$, AFIND+ shows only marginal reductions without significant performance degradation. However, higher values of $\sigma_2$ risk compromising performance. This suggests that our approach is compatible with a specific threshold of privacy preservation.


\end{document}